\documentclass{article}
\usepackage{graphicx} %
\usepackage{iclr2025_conference,times}
\usepackage{subcaption} 
\usepackage{caption} 


\usepackage{amsmath}
\usepackage{amsthm}

\newtheorem{thm}{Theorem}[section]
\newtheorem{cor}[thm]{Corollary}

\theoremstyle{definition}
\newtheorem{remark}[thm]{Remark}




\usepackage[utf8]{inputenc} 
\usepackage[T1]{fontenc}    
\usepackage{hyperref}       
\usepackage{url}            
\usepackage{booktabs}       
\usepackage{amsfonts}       
\usepackage{nicefrac}       
\usepackage{microtype}      
\usepackage{xcolor}         
\usepackage{comment}

\newcommand{\mat}[1]{\mathbf{#1}}

\title{On the expressiveness and spectral bias of KANs}

%

\author{%
Yixuan Wang$^{1,+,}$\thanks{roywang@caltech.edu} \quad Jonathan W. Siegel$^{2,+}$ \quad Ziming Liu$^{3,4}$ \quad Thomas Y. Hou$^{1}$ \\
$^1$ California Institute of Technology \\ $^2$ Texas A\&M University \\ $^3$ Massachusetts Institute of Technology \\ $^4$ The NSF Institute for Artificial Intelligence and Fundamental Interactions
}

\iclrfinalcopy
\begin{document}

\maketitle
\def\thefootnote{+}\footnotetext{These authors contributed equally to this work.}

\begin{abstract}

Kolmogorov-Arnold Networks (KAN) \cite{liu2024kan} were very recently proposed as a potential alternative to the prevalent architectural backbone of many deep learning models, the multi-layer perceptron (MLP). KANs have seen success in various tasks of AI for science, with their empirical efficiency and accuracy demonstrated in function regression, PDE solving, and many more scientific problems.
 
In this article, we revisit the comparison of KANs and MLPs, with emphasis on a theoretical perspective. {On the one hand, we compare the representation and approximation capabilities of KANs and MLPs. We establish that MLPs can be represented using KANs of a comparable size. This shows that the approximation and representation capabilities of KANs are at least as good as MLPs. Conversely, we show that KANs can be represented using MLPs, but that in this representation the number of parameters increases by a factor of the KAN grid size. This suggests that KANs with a large grid size may be more efficient than MLPs at approximating certain functions.} On the other hand, from the perspective of learning and optimization, we {study the spectral bias of KANs compared with MLPs. We demonstrate that KANs are less biased toward low frequencies than MLPs.} We highlight that the multi-level learning feature specific to KANs, i.e. grid extension of splines, improves the learning process {for high-frequency components.}  Detailed comparisons with different choices of depth, width, and grid sizes of KANs are made, shedding some light on how to choose the hyperparameters in practice. 
\end{abstract}

\section{Introduction}
 Recently, in \cite{liu2024kan}, a novel architecture called Kolmogorov-Arnold Networks (KANs) was proposed as a potentially more accurate and interpretable alternative to standard multi-layer perceptron (MLP) \cite{cybenko1989approximation,hornik1989multilayer}. The KAN architecture leverages the Kolmogorov-Arnold representation theorem (KART) \cite{kolmogorov, kolmogorov1957representation} to parameterize functions. KANs share the same fully connected structures as MLPs, while putting learnable activation functions on edges as opposed to a fixed activation function on nodes for MLPs. B-splines are used to parameterize the learned nonlinearity and in practice one can go beyond the two-layer construction indicated by KART. KANs can be conceptualized as a hybrid of splines and MLPs, and the combination of compositional nonlinearity with 1D splines contributes to both the accuracy and interpretability of KANs.

 {In this article, we study the approximation theory and spectral bias of KANs and compare them with MLPs. Specifically, we show that any MLP with the ReLU$^k$ activation function can be reparameterized as a KAN with a comparable number of parameters. This establishes that the representation and approximation power of KANs is at least as great as that of MLPs. On the other hand, we also show that any KAN (without SiLU non-linearity) can be represented using an MLP. However, the number of parameters in the MLP representation is larger by a factor proportional to the grid size of the KAN. While we do not know if our construction is optimal in the deep case, this suggests that KANs with a large grid size may be more efficient at parameterizing certain classes of functions than MLPs. We combine these results with existing optimal approximation rates for MLPs to obtain approximation rates for KANs on Sobolev spaces.}


Next, we study the spectral bias phenomenon in the training of KANs. Standard MLPs with ReLU activations (or even tanh) are known to suffer from the spectral bias \cite{rahaman2019spectral,xu2019frequency,xu2019training}, in the sense that they will fit low-frequency components first. This is in contrast to traditional iterative numerical methods like the Jacobi method which learn high frequencies first \cite{xu2019frequency}. Although the spectral bias acts as a regularizer which improves performance for machine learning applications \cite{rahaman2019spectral,zhang2021understanding,poggio2018theory,zhang2020rethink,fridovich2022spectral}, for scientific computing applications it may be necessary to learn high-frequencies as well. 
To alleviate the spectral bias, high-frequency information has to be encoded using methods like Fourier feature mapping
\cite{sitzmann2020implicit,tancik2020fourier,benbarka2022seeing,novello2024taming}, or one needs to use nonlinear activation functions more similar to traditional methods; see for example the hat activation function \cite{hong2022activation} which resembles a finite element basis. We theoretically consider a single KAN layer and show that it does not suffer from the spectral bias by analyzing gradient descent for minimizing a least squares objective. Although our analysis is necessarily highly simplified, we argue that it provides some evidence and intuition showing that KANs will have a reduced spectral bias compared with MLPs.


Finally, we study the spectral bias of KANs experimentally on a wide variety of problems, including 1D frequency fitting, fitting samples from a higher-dimensional Gaussian kernel, and solving the Poisson equation with a high-frequency solutions. Based upon the results of our experiments, KANs consistently suffer less from the spectral bias. This also helps explain why KANs are more subject to noise and overfitting, as observed in 
\cite{shen2024reduced}, while grid coarsening (the opposite of grid extension) helps increase the spectral bias and reduces overfitting. Notice that the highest frequency that a single layer of KANs can learn is restricted to the number of grid points we use. The compositional structure associated with the depth also plays a role in learning the different frequencies. We study the effect of depths, widths, and grid sizes of KANs for the spectral bias, and draw conclusions about best practices when choosing KAN hyperparameters.

{\paragraph{Our contribution.} Our goal in this paper is to theoretically compare the KAN architecture with the commonly used MLP architecture. Our specific contributions are as follows:
\begin{itemize}
    \item We compare the approximation and representation ability of KANs and MLPs. We show that KANs are at least as expressive as MLPs. We use this to obtain approximation rates for KANs on Sobolev spaces.
    \item We theoretically analyze gradient descent applied to optimize the least squares loss using a single layer KAN. Based upon this analysis, we argue that KANs, unlike MLPs, do not suffer much from a spectral bias.
    \item We provide numerical experiments demonstrating that KANs exhibit less of a spectral bias than MLPs on a variety of problems. This validates our theory and also provides a partial explanation for the success of KANs on problems in scientific computing.
\end{itemize}

\section{Prior Work}
    A theory on the approximation ability of KANs in the function class of compositionally smooth functions was proposed as KAT in \cite{liu2024kan}. A convergence result independent of the dimension was obtained, leveraging the approximation theory of 1D splines. Compared with the universal approximation theorem of MLPs, KANs take advantage of the intrinsically low-dimensional compositional representation of underlying functions. This result shares an analogy to the rate in generalization error bounds of finite training samples, for a similar space studied for regression problems; see \cite{horowitz2007rate, kohler2021rate}, and also specifically for MLPs with ReLU activations \cite{schmidt2020nonparametric}. On the other hand, for general Sobolev or Besov spaces, sharp approximation rates have been obtained for ReLU-MLPs (and more generally MLPs with most piecewise polynomial activation functions) \cite{yarotsky2017error, bartlett2019nearly, siegel2023optimal}. These rates exhibit the curse of dimensionality, which is unavoidable due to the fact that Sobolev and Besov spaces with fixed smoothness are very large in high dimensions.

    There are also subsequent works exploring other parametrizations of activation functions in the KAN formulation, including special polynomials \cite{aghaei2024fkan, seydi2024exploring, ss2024chebyshev}, rational functions \cite{aghaei2024rkan}, radial basis function \cite{li2024kolmogorov,ta2024bsrbf}, Fourier series \cite{xu2024fourierkan}, and wavelets \cite{bozorgasl2024wav,seydi2024unveiling}. Active follow-up research focuses on applying KANs to various domains, such as partial differential equations \cite{wang2024kolmogorov,shukla2024comprehensive, rigas2024adaptive} and operator learning \cite{abueidda2024deepokan,shukla2024comprehensive,nehma2024leveraging}, graphs \cite{bresson2024kagnns,de2024kolmogorov,kiamari2024gkan,zhang2024graphkan}, time series \cite{vaca2024kolmogorov,genet2024tkan,xu2024kolmogorov,genet2024temporal}, computer vision \cite{cheon2024kolmogorov,azam2024suitability,li2024u,cheon2024demonstrating,seydi2024unveiling,bodner2024convolutional}, and various scientific problems \cite{liu2024ikan,liu2024initial,yang2024endowing,herbozo2024kan,kundu2024kanqas, li2024coeff,ahmed24graphkan,liu2024complexityclaritykolmogorovarnoldnetworks,peng2024predictive,pratyush2024calmphoskan}. In the KAN 2.0 paper \cite{liu2024kan2}, multiplication was introduced as a built-in modularity of KANs and the connection between KANs and scientific problems was further established in terms of identifying important features, modular structures, and symbolic formulas. In this article, we focus on the original B-spline formulation, one of the reasons being its alignment with continual learning and adaptive learning; see also \cite{rigas2024adaptive}.

    The spectral bias of MLPs has been studied both experimentally and theoretically in a variety of works, see for instance \cite{rahaman2019spectral,zhang2021understanding,xu2019frequency,xu2019training,hong2022activation,poggio2018theory,cai2019multi,basri2020frequency,fridovich2022spectral,zhang2023shallow,ronen2019convergence} and the references therein. This phenomenon is proposed as explaining the regularizing effect of stochastic gradient descent \cite{rahaman2019spectral,fridovich2022spectral}. {In \cite{wang2024kolmogorov} an empirical analysis of the eigenvalues of the Neural Tangent Kernel \cite{jacot2018neural} matrix is performed, which hints at the better performance of KANs for fitting high frequencies. However, the spectral bias of finite width KANs has not yet been considered, and it is this gap that we aim to close in this work.}
}

\section{Representation and Approximation}
In this section, we study how to represent ReLU$^k$ MLPs using KANs with degree $k$ splines and vice-versa. Using this, we establish approximation rates for KANs from the corresponding results for MLPs. {The measure of complexity that we consider in this section is the number of parameters of the KAN or MLP model. We remark that the number of parameters may not always be the most useful measure of complexity, especially considering that we are comparing different architectures, but it is generally indicative of the efficiency of the model. We leave the problem of comparing KANs and MLPs using different measures of complexity to future work.}
\subsection{Review of the KAN Architecture}
We recall the following definitions of the Kolmogorov-Arnold Network (KAN) architecture introduced in \cite{liu2024kan}. We refer to \cite{liu2024kan} for a thorough treatment of all details mentioned here.
\paragraph {KAN architecture}
The Kolmogorov-Arnold representation theorem (KART) states that any multivariate continuous function on a bounded domain can be represented as a finite composition of univariate continuous functions and addition. Specifically for a continuous $f:[0,1]^n\to\mathbb{R}$, there exists continuous 1D functions $\phi_{q,p}, \Phi_q$ such that (see for instance \cite{braun2009constructive})
\begin{equation}\label{eq:KART}
    f(\mathbf{x}) = f(x_1,\cdots,x_n)=\sum_{q=1}^{2n+1} \Phi_q\left(\sum_{p=1}^n\phi_{q,p}(x_p)\right).
\end{equation}

Inspired by KART corresponding to a two-layer neural network representation, the authors in \cite{liu2024kan} unified the inner and outer functions via the proposed KAN layers and generalized the idea of composition to arbitrary depths. A KAN can be represented by an integer array  $[n_0,n_1,\cdots,n_L]$. Here $L$ is the number of layers and $n_0$  and $n_L$ are input and output dimensions respectively. $n_l$ denotes the number of neurons in the $l$-th neuron layer. Usually, we choose $n_1=\cdots=n_{L-1}=W$ as the width and refer to $L$ as the depth of the KAN. Layer $l$ maps a vector $\mat{x}_l\in\mathbb{R}^{n_l}$ to $\mat{x}_{l+1}\in\mathbb{R}^{n_{l+1}}$, via nonlinear activations and additions, in the matrix form as
\begin{equation}\label{eq:kanforwardmatrix}
    \mat{x}_{l+1} = 
    \underbrace{\begin{pmatrix}
        \phi_{l,1,1}(\cdot) & \phi_{l,2,1}(\cdot) & \cdots & \phi_{l,n_{l},1}(\cdot) \\
        \phi_{l,1,2}(\cdot) & \phi_{l,2,2}(\cdot) & \cdots & \phi_{l,n_{l},2}(\cdot) \\
        \vdots & \vdots & & \vdots \\
        \phi_{l,1,n_{l+1}}(\cdot) & \phi_{l,2,n_{l+1}}(\cdot) & \cdots & \phi_{l,n_l,n_{l+1}}(\cdot) \\
    \end{pmatrix}}_{\mat{\Phi}_l}
    \mat{x}_{l}.
\end{equation}
The overall network can be written as a composition of  $L$ KAN layers as
\begin{equation}\label{eq:KAN_forward}
    {\rm KAN}(\mat{x})=(\mat{\Phi}_{L-1}\circ\cdots\circ\mat{\Phi}_1\circ\mat{\Phi}_0)\mat{x}.
\end{equation}
Each activation function $\phi = \phi_{l,i,j}$ in \eqref{eq:kanforwardmatrix} is parametrized by a linear combination of $k$-th order B-splines and a SiLu nonlinearity. Namely 
\begin{equation}\label{KAN-activation-function}
    \phi(x)=w_{b} x/(1+e^{-x})+\sum_{i=0}^{G+k-1} c_iB_i(x),
\end{equation}
where $G$ is the grid size, $B_i$ are spline basis and $c_i$, $w_b$ are trainable coefficients. The ranges of the grid points of the B-splines are updated on the fly, based on the range of the output of the previous layer. For $\phi$ on a bounded interval $[a,b]$, consider the uniform grid $\{t_0=a,t_1,t_2,\cdots, t_{G}=b\}$, uniformally extended to $\{t_{-k},\cdots,t_{-1},t_0,\cdots, t_{G},t_{G+1},\cdots,t_{G+k}\}$. On this grid there are $G+k$ B-spline basis functions, with the $i^{\rm th}$ B-spline $B_i(x)$ being non-zero only on $[t_{-k+i},t_{i+1}]$.
\paragraph {Grid extension}
A very important structure of KANs proposed in \cite{liu2024kan} is the grid extension technique, inheriting the multi-level fine-graining from splines. One can first train KANs with fewer parameters to a desired accuracy, before making the spline grids finer with initialization from the coarser grid and continuing the training procedure. Grid extensions for all splines in a KAN can be performed independently and adaptively. We highlight that this technique is specific to the choice of splines as parameterizations of activation functions, and is particularly useful in the training process; see for example the discussion of spectral bias in the next section.

\paragraph{Approximation theory, KAT} We recollect the approximation theory established for compositionally smooth functions in \cite{liu2024kan}.
\begin{thm}\label{approx thm}
Let $\mat{x}=(x_1,x_2,\cdots,x_n)$.
    Suppose that a function $f(\mat{x})$ admits a representation  \begin{equation}
    f = (\mat{\Phi}_{L-1}\circ\mat{\Phi}_{L-2}\circ\cdots\circ\mat{\Phi}_{1}\circ\mat{\Phi}_{0})\mat{x}\,,
\end{equation}
 as in \eqref{eq:KAN_forward}, where each one of the $\Phi_{l,i,j}$ is  $(k+1)$-times continuously differentiable. Then there exists a constant $C$ depending on $f$ and its representation, such that we have the following approximation bound in terms of the grid size $G$: there exist $k$-th order B-spline functions $\Phi_{l,i,j}^G$ such that for any $0\leq m\leq k$, we have the bound for the $C^m$-norm \begin{equation}\label{appro bound}
    \|f-(\mat{\Phi}^G_{L-1}\circ\mat{\Phi}^G_{L-2}\circ\cdots\circ\mat{\Phi}^G_{1}\circ\mat{\Phi}^G_{0})\mat{x}\|_{C^m}\leq CG^{-k-1+m}\,.
\end{equation}\end{thm}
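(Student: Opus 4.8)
The plan is to combine the classical univariate spline approximation rate with a stability estimate for compositions, propagated layer by layer through the representation \eqref{eq:KAN_forward}. \emph{Univariate rate:} I would start from the standard fact (see for instance the monographs of de Boor or Schumaker) that if $g\in C^{k+1}([a,b])$ and $\{t_0=a<\cdots<t_G=b\}$ is the uniform grid of spacing $h=(b-a)/G$, then some degree-$k$ spline $g^G$ on this grid — for instance a quasi-interpolant — satisfies
\[
\|g^{(m)}-(g^G)^{(m)}\|_{L^\infty([a,b])}\le C_k\,\|g^{(k+1)}\|_{L^\infty([a,b])}\,h^{\,k+1-m},\qquad 0\le m\le k,
\]
where for $m=k$ the $k$-th derivative of $g^G$ is understood cellwise. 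Applying this to each of the finitely many entries $\Phi_{l,i,j}$ of $\mat{\Phi}_l$ (on the interval fixed below) produces spline entries $\Phi^G_{l,i,j}$ with $\|\Phi_{l,i,j}-\Phi^G_{l,i,j}\|_{C^m}\le C\,G^{-(k+1-m)}$, the constant controlled by $\|\Phi_{l,i,j}^{(k+1)}\|_\infty$.

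\emph{Domains:} Write $F_l=\mat{\Phi}_l\circ\cdots\circ\mat{\Phi}_0$ for the exact partial composition ($F_{-1}=\mathrm{id}$) and $F^G_l=\mat{\Phi}^G_l\circ\cdots\circ\mat{\Phi}^G_0$ for the spline one. Since $F_{l-1}$ is continuous on the compact cube $[0,1]^n$, its image lies in a box $K_{l-1}$; I would fix a slightly larger box $K'_{l-1}\supset K_{l-1}$ and build the layer-$l$ spline grids on $K'_{l-1}$, which is precisely the adaptive range-updating rule of KANs. A companion claim, proved together with the next step, is that once $G$ is large enough $F^G_{l-1}$ also sends $[0,1]^n$ into $K'_{l-1}$, so that the composed spline network is well defined and the univariate estimate may legitimately be invoked on $K'_{l-1}$.

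\emph{Composition stability:} The target is $\|F_l-F^G_l\|_{C^m}\le C_l\,G^{-(k+1-m)}$, by induction on $l$, with base case $l=-1$ trivial. For the step I would split
\[
F_l-F^G_l=\big(\mat{\Phi}_l\circ F_{l-1}-\mat{\Phi}_l\circ F^G_{l-1}\big)+\big(\mat{\Phi}_l\circ F^G_{l-1}-\mat{\Phi}^G_l\circ F^G_{l-1}\big).
\]
The second bracket is bounded, via the multivariate chain rule (Fa\`a di Bruno) and the inductive control of $\|F^G_{l-1}\|_{C^m}$, by $C\sup_{K'_{l-1}}\|\mat{\Phi}_l-\mat{\Phi}^G_l\|_{C^m}\le C\,G^{-(k+1-m)}$ using the univariate step. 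The first bracket, since $\mat{\Phi}_l$ is $C^{k+1}$ with derivatives bounded on $K'_{l-1}$, is $\le C\,\|F_{l-1}-F^G_{l-1}\|_{C^m}\le C_{l-1}\,G^{-(k+1-m)}$ by the mean value theorem together with Fa\`a di Bruno for the higher derivatives and the induction hypothesis. Summing the two closes the induction, and the smallness of $\|F_{l-1}-F^G_{l-1}\|_{C^0}$ for large $G$ simultaneously delivers the domain claim above. Taking $l=L-1$ gives exactly \eqref{appro bound}.

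\emph{Main obstacle:} The delicate part is the $C^m$ composition-stability estimate — differentiating $m$ times through nested compositions forces Fa\`a di Bruno's formula and a careful accounting of how the constants grow with the depth $L$ and with bounds on the derivatives of the exact activations; this is the source of ``the constant $C$ depending on $f$ and its representation''. A minor but real subtlety is the endpoint $m=k$: degree-$k$ splines are only $C^{k-1}$, so the $C^k$-norm appearing in \eqref{appro bound} has to be interpreted cellwise (or one states the clean statement only for $m\le k-1$). The univariate rate and the domain enlargement are standard.
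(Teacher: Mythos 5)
This theorem is not proved in the present paper: it is Theorem 2.1 (KAT) recalled verbatim from \cite{liu2024kan}, and no proof appears in the appendix here. Your proposal reconstructs essentially the argument given in that original reference: the univariate quasi-interpolant rate $O(G^{-(k+1-m)})$ for each $C^{k+1}$ activation, combined with a layer-by-layer induction using the telescoping split $\mat{\Phi}_l\circ F_{l-1}-\mat{\Phi}^G_l\circ F^G_{l-1}=(\mat{\Phi}_l\circ F_{l-1}-\mat{\Phi}_l\circ F^G_{l-1})+(\mat{\Phi}_l\circ F^G_{l-1}-\mat{\Phi}^G_l\circ F^G_{l-1})$ and composition stability of $C^m$ norms. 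The argument is sound: the requirement $m\le k$ together with $\Phi_{l,i,j}\in C^{k+1}$ is exactly what lets you control the first bracket in $C^m$ (differences of compositions in $C^m$ need one extra derivative of the outer function), and your handling of the domain enlargement and of the $m=k$ endpoint (where the $k$-th derivative of a degree-$k$ spline must be read cellwise) correctly identifies the only genuine subtleties.
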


\subsection{Reparametrization of KANs and MLPs}
Next, we compare the approximation capabilities of KANs and MLPs. In particular, we have the following result showing that any ReLU$^k$ MLP can be exactly represented using a KAN with degree $k$ splines which is only slightly larger. We remark that this includes the case where $k=1$, i.e. the popular and standard ReLU activation function.
\begin{thm}\label{mlp-kan-representation-thm}
    Let $\Omega\subset \mathbb{R}^d$ be a bounded domain. Suppose that a function $f:\mathbb{R}^d\rightarrow \mathbb{R}$ can be represented by an MLP with width $W\geq 1$, depth $L\geq 1$, and activation function $\sigma_k = \max(0,x)^k$ for $k \geq 1$. Then there exists a KAN $g$ with width $W$, depth at most $2L$, and grid size $G = 2$ with $k$-th order B-spline functions such that
    \begin{equation}
        g(x) = f(x)
    \end{equation}
    for all $x\in \Omega$.
\end{thm}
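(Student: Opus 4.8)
The plan is to decompose the MLP layer by layer, spending at most two KAN layers per MLP layer. Write the MLP in the standard alternating form $f = T_L\circ\sigma_k\circ T_{L-1}\circ\sigma_k\circ\cdots\circ\sigma_k\circ T_1$ (with a trailing $\sigma_k$ if the MLP has a final activation), where each $T_\ell(y)=A_\ell y + b_\ell$ is affine and $\sigma_k=\max(0,\cdot)^k$ acts entrywise. I realize each $T_\ell$ by one KAN layer whose edge activation from input $i$ to output $j$ is the affine function $t\mapsto (A_\ell)_{ji}\,t$, folding the bias $(b_\ell)_j$ into the single edge with $i=1$; and I realize each entrywise $\sigma_k$ by a ``diagonal'' KAN layer whose edge activations are $\sigma_k$ on the diagonal and the zero function off it. Throughout, the SiLU term in \eqref{KAN-activation-function} is switched off by setting $w_b=0$. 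Composing these gives a KAN of width $W$ (padding any MLP layer narrower than $W$ with inactive neurons) and depth at most $2L$; it then only remains to check that each required edge activation is exactly a degree-$k$ B-spline on a grid of size $G=2$.

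This reduces the theorem to a fact about degree-$k$ splines with a single interior knot. On an interval $[a,b]$ with one interior knot $t_1$, the $G+k=k+2$ B-spline basis functions in \eqref{KAN-activation-function} span the full space $S$ of functions that are polynomials of degree $\le k$ on $[a,t_1]$ and on $[t_1,b]$ and $C^{k-1}$ at $t_1$; a dimension count gives $\dim S = 2(k+1)-k = k+2$, so the B-spline basis is in fact a basis for all of $S$. Every affine function lies in $S$ (indeed every polynomial of degree $\le k$ does, which also follows from the B-spline partition of unity and the Marsden identity), so the affine KAN layers are representable on any grid. For the nonlinearity, choose the interior knot to be $t_1=0$: then $\sigma_k(t)=\max(0,t)^k$ is a polynomial of degree $k$ on $[a,0]$ and on $[0,b]$ and is $C^{k-1}$ at $0$ — its value and first $k-1$ derivatives all vanish there and only the $k$-th derivative jumps — hence $\sigma_k\in S$ and is a degree-$k$ B-spline on the grid $\{a,0,b\}$, which is exactly grid size $G=2$. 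This is the crux of the argument, and it is where $k\ge 1$ enters (the case $k=0$ also works, via a step function).

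To make the grids compatible with the convention that grid ranges track the previous layer's output, I use that $\Omega$ is bounded and $f$ is a fixed continuous function: each intermediate coordinate of the MLP then ranges over a bounded set, so I pick the interval $[a_\ell,b_\ell]$ of the $\ell$-th layer's splines to contain that range, and, for the nonlinearity layers, also to contain $0$ in its interior with the single interior knot placed there. On the actual input range the constructed spline agrees exactly with the target edge function, which is all that is required — we build one fixed KAN and never invoke the on-the-fly grid updates. I expect the only real subtlety to be bookkeeping: confirming $\dim S = G+k$ so that the B-spline basis spans $S$ and not a proper subspace, and checking that the KAN forward rule $\mat{x}_{l+1}=\mat{\Phi}_l\mat{x}_l$ in \eqref{eq:kanforwardmatrix} sums the edge activations in precisely the way this construction needs.

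Finally, I would remark that each affine map $T_{\ell+1}$ can be fused with the $\sigma_k$ layer immediately preceding it, since $t\mapsto\sum_i (A_{\ell+1})_{ji}\,\sigma_k(t)$ is again a degree-$k$ spline with knot at $0$; this tightens the depth bound from $2L$ to $L+1$, but $2L$ already suffices for the stated result.
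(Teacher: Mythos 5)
Your proposal is correct and follows essentially the same route as the paper's proof: decompose each MLP layer into an affine KAN layer plus a diagonal $\sigma_k$ KAN layer (depth $2L$ total), and verify that affine functions and $\sigma_k$ are exact degree-$k$ splines on a two-cell grid with interior knot at $0$, choosing the spline intervals large enough to cover the bounded ranges of the intermediate neurons. Your explicit dimension count for the spline space and the concluding remark on fusing the affine map into the preceding activation layer (tightening the depth to $L+1$) are nice additions, but the argument is the same.
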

On the other hand, we may ask whether every KAN can also be expressed using an MLP of a comparable size. If the functions $\phi_{l,i,j}$ in the KAN representation have weight $w_b \neq 0$ in \eqref{eq:KAN_forward}, then it is clear that we can not represent the resulting KAN using an MLP with activation $\sigma_k$, since the smooth function $f(x) = x/(1+e^{-x})$ is not a polynomial. Essentially, the SiLU nonlinearity cannot be captured by a ReLU$^k$ MLP. However, if this nonlinearity is not present, then a KAN can be represented using a ReLU$^k$ MLP.
\begin{thm}\label{kan-mlp-representation-thm}
    Suppose that a function $f:[0,1]^d\rightarrow \mathbb{R}$ can be represented by a KAN with width $W\geq 1$, depth $L\geq 1$, and grid size $G$ with $k$-th order B-spline functions. Assume also that the weight $w_b = 0$ in \eqref{eq:KAN_forward} for all of the activations $\phi_{l,i,j}$ appearing in the KAN. Then there exists an MLP with width $(G+2k+1)W^2$, depth at most $2L$, and activation function $\sigma_k = \max(0,x)^k$ which represents $f$.
\end{thm}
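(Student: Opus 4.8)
The plan is to represent each individual KAN layer by a depth-two ReLU$^k$ network, and then stack these blocks, absorbing each pair of adjacent affine maps into one. The hypothesis $w_b = 0$ is exactly what makes this work: it forces every activation $\phi = \phi_{l,i,j}$ appearing in the KAN to be a pure degree-$k$ spline $\phi(x) = \sum_{i=0}^{G+k-1} c_i B_i(x)$ on a fixed extended knot sequence $t^{(l,i)}_{-k} < \cdots < t^{(l,i)}_{G+k}$; for a fixed (trained) KAN these knots are just fixed real numbers, so the ``adaptive'' nature of the grids causes no difficulty.

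The key step is the classical representation of a degree-$k$ B-spline as a divided difference of a truncated power. For knots $\tau_0 < \cdots < \tau_{k+1}$ one has (up to the fixed normalization used by the KAN)
\[
  B(x) \;=\; (\tau_{k+1}-\tau_0)\sum_{j=0}^{k+1}\frac{(\tau_j-x)_{+}^{k}}{\prod_{m\neq j}(\tau_j-\tau_m)},
\]
which is a linear combination of the $k+2$ functions $x\mapsto\sigma_k(\tau_j-x)=(\tau_j-x)_{+}^{k}$; each of these is a ReLU$^k$ unit since $\tau_j-x$ is affine in $x$, and the identity holds for all $x\in\mathbb{R}$ (it reproduces the compact support automatically, since the $(k+1)$st divided difference of a degree-$k$ polynomial vanishes). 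Applying this to each basis function $B_i$ — which is built from the $k+2$ consecutive knots $t^{(l,i)}_{-k+i},\dots,t^{(l,i)}_{i+1}$ — and summing over $i$, every activation $\phi_{l,i,j}$ becomes a linear combination of the $G+2k+1$ ReLU$^k$ units $\{\sigma_k(t^{(l,i)}_p-x):p=-k,\dots,G+k\}$. Thus $\phi_{l,i,j}$ is computed by an affine map $\mathbb{R}\to\mathbb{R}^{G+2k+1}$, then $\sigma_k$, then a linear functional.

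Now I assemble. The $l$-th KAN layer computes $(\mathbf{x}_{l+1})_j=\sum_{i=1}^{n_l}\phi_{l,i,j}((\mathbf{x}_l)_i)$, so concatenating the representations above realizes $\mat{\Phi}_l$ as an affine map into a hidden space of dimension at most $n_l n_{l+1}(G+2k+1)\le W^2(G+2k+1)$, followed by $\sigma_k$, followed by an affine map onto $\mathbb{R}^{n_{l+1}}$ (one may instead share the unit $\sigma_k(t^{(l,i)}_p-(\mathbf{x}_l)_i)$ across the output index $j$ and shrink the hidden dimension to $n_l(G+2k+1)$, but the stated bound is already met). Composing these $L$ blocks for $l=0,\dots,L-1$ and merging each trailing affine map with the next leading affine map — a composition of affine maps being affine — yields an MLP with activation $\sigma_k$, at most $L$ hidden layers (hence depth at most $2L$), and width at most $W^2(G+2k+1)$. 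Every reduction used above is an exact identity, so the resulting MLP equals $(\mat{\Phi}_{L-1}\circ\cdots\circ\mat{\Phi}_0)\mathbf{x}=f(\mathbf{x})$ for all $\mathbf{x}\in[0,1]^d$.

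The main obstacle is the B-spline step: one must invoke (or re-derive) the exact truncated-power/divided-difference formula for B-splines, and — to obtain the claimed width — observe that the $G+k$ basis functions of a single activation involve only the $G+2k+1$ extended knots, so that one affine map of output width $G+2k+1$ per input neuron suffices. The remaining parts (checking that the per-layer map is genuinely affine--$\sigma_k$--affine, and that composing absorbs the affine maps without inflating the width or requiring an activation on the output layer) are routine bookkeeping.
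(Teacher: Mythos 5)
Your proposal is correct and follows essentially the same route as the paper: represent each activation $\phi_{l,i,j}$ (a pure spline since $w_b=0$) by a one-hidden-layer $\sigma_k$ network with one unit per extended knot, giving width $G+2k+1$ per activation and $(G+2k+1)W^2$ per KAN layer, then compose the $L$ affine--$\sigma_k$--affine blocks. The only difference is that you derive the truncated-power/divided-difference identity explicitly where the paper cites a reference for this fact.
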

We remark that the width in this result scales like $O(GW^2)$ so that the number of parameters in the MLP scales like $O(G^2W^4L)$, while the number of parameters in the KAN scales like $O(GW^2L)$. This indicates that KANs with a large number of breakpoints at each node may be more efficient in expressing certain functions than MLPs. {When the grids of the spline basis are different across each neuron pairs, which is natural after grid update in training, then there are $(G+2k+1)W^2$ different break points of the splines for each KAN layer. If we restrict ourselves to using only $1$ hidden layer of ReLU$^k$ for each KAN layer, as in the construction of the proof in appendix \ref{proof1}, then by \cite{he2018relu} it follows that the width of ReLU$^k$ MLP has to be at least $(G+2k+1)W^2$ and the theorem is sharp.} However, we don't know whether the preceding theorem is sharp in the deep case.

Using these results, we can draw conclusions about the approximation capabilities of KANs by leveraging existing results about MLPs (see for instance \cite{barron1993universal,leshno1993multilayer,klusowski2018approximation,siegel2020approximation,siegel2022sharp,hon2022simultaneous,lu2021deep,shen2022optimal,yarotsky2018optimal,siegel2023optimal,yang2023nearly,yang2023optimal}). For example, we have the following result giving optimal approximation rates for very deep KANs on Sobolev spaces (see for instance \cite{adams2003sobolev} for the background on Sobolev spaces).
\begin{cor}
\label{corollary}
    Let $\Omega\subset \mathbb{R}^d$ be a bounded domain with smooth boundary, $s > 0$ and $1\leq p,q\leq \infty$ be such that $1/q - 1/p < s/d$. This guarantees that the compact Sobolev embedding $W^s(L_q(\Omega))\subset\subset L_p(\Omega)$ holds. 
    
    Let $W_0 := W_0(d)$ be a fixed width (depending upon the input dimension $d$). Then for any $f\in W^s(L_q(\Omega))$ and any $L \geq 1$, there exists a KAN $g$ with width $W_0$, depth $L$, and grid size {$G=2$} with $k$-th order $B$-spline functions such that
    \begin{equation}
        \|f - g\|_{L_p(\Omega)} \leq CL^{-2s/d},
    \end{equation}
    where $C$ is a constant independent of $L$.
\end{cor}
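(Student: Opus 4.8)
The plan is to obtain the rate for KANs by combining the reparametrization Theorem~\ref{mlp-kan-representation-thm} with known optimal approximation rates for deep MLPs on Sobolev spaces. In the regime where the depth is the complexity parameter and the width is held fixed, the results of \cite{yarotsky2017error,bartlett2019nearly,lu2021deep,siegel2023optimal} provide a width $W_1 = W_1(d)$, depending only on the input dimension, such that for every $f \in W^s(L_q(\Omega))$ and every integer $m \ge 1$ there is an MLP $h_m$ with activation $\sigma_k(x) = \max(0,x)^k$, width $W_1$, and depth $m$ satisfying
\begin{equation}
\|f - h_m\|_{L_p(\Omega)} \le C_1\, m^{-2s/d},
\end{equation}
with $C_1$ independent of $m$. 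The hypothesis $1/q - 1/p < s/d$ is precisely what these rates (and the compact embedding $W^s(L_q(\Omega))\subset\subset L_p(\Omega)$) require, and the smoothness of $\partial\Omega$ enters only through a bounded extension operator (Stein extension, see \cite{adams2003sobolev}), which reduces the estimate on the general domain $\Omega$ to the cube-domain setting in which the cited constructions are usually stated.

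Given a target KAN depth $L$, I would set $m = \lfloor L/2 \rfloor$, so that $m \ge L/3$ for $L \ge 2$; the case $L = 1$ is trivial, since then the right-hand side $L^{-2s/d}$ is a positive constant and any bounded KAN of depth $1$ works after enlarging $C$. Applying the displayed estimate yields an MLP $h_m$ of width $W_1$ and depth $m$ with $\|f - h_m\|_{L_p(\Omega)} \le C_1 m^{-2s/d} \le C_2 L^{-2s/d}$. Then Theorem~\ref{mlp-kan-representation-thm}, applied to $h_m$, produces a KAN $g_0$ of width $W_1$, depth at most $2m \le L$, and grid size $G = 2$ with $k$-th order B-splines, such that $g_0 = h_m$ on $\Omega$; hence $\|f - g_0\|_{L_p(\Omega)} \le C_2 L^{-2s/d}$.

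It remains to make the depth exactly $L$ and to record the width. To lift the depth of $g_0$ from at most $2m$ up to exactly $L$ I would pad $g_0$ with identity KAN layers $\mathbf{x}\mapsto\mathbf{x}$: such a layer is a bona fide KAN layer, since with $w_b = 0$ each activation is a degree-$k$ spline, the linear map $t\mapsto t$ restricted to any bounded interval lies in the span of the $k$-th order B-splines (polynomials of degree $\le k$ are degree-$k$ splines), the off-diagonal activations may be taken identically $0$, and all values flowing through the network remain in a bounded set so suitable grids exist. Inserting these layers changes neither the represented function, the width, nor the grid size (and a KAN can be trivially widened by adjoining neurons connected through zero activations, so one may harmlessly take the common width to be $W_0 := \max(d, W_1(d))$, which is independent of $L$). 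This produces a KAN $g$ of width $W_0$, depth exactly $L$, and grid size $G = 2$ with $\|f - g\|_{L_p(\Omega)} \le C L^{-2s/d}$, as claimed.

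The only genuinely delicate step is the first one: one must extract from the literature an MLP approximation theorem in exactly this deep, fixed-width regime, with $L_p$-error decaying like (depth)$^{-2s/d}$ — the improved exponent $2s/d$, rather than the naive $s/d$, is what makes the claimed rate possible and relies on trading depth for accuracy — and for the $\mathrm{ReLU}^k$ activation rather than plain ReLU, which is covered by the results for networks with piecewise polynomial activations. Everything downstream — the extension to a general domain, the halving of the depth, the identity-layer padding, and the trivial case $L = 1$ — is routine bookkeeping.
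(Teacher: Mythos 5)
Your proposal is correct and follows exactly the route the paper intends: the paper gives no detailed proof, stating only that the corollary ``follows immediately from Theorem~\ref{mlp-kan-representation-thm} and the approximation rates for ReLU (and more generally piecewise polynomial) neural networks derived in \cite{siegel2023optimal}.'' You have simply filled in the bookkeeping (depth halving, identity-layer padding, width normalization, the $L=1$ case, and the extension to a general smooth domain), all of which is done correctly.
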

This result, which follows immediately from Theorem \ref{mlp-kan-representation-thm} and the approximation rates for ReLU (and more generally piecewise polynomial) neural networks derived in \cite{siegel2023optimal}, shows that very deep KANs attain an exceptionally good approximation rate on Sobolev spaces. In particular, in terms of the number of parameters $P$ they attain an approximation rate of $O(P^{-2s/d})$, while a classical (even non-linear) method of approximation can only attain a rate of $O(P^{-s/d})$ \cite{devore1998nonlinear}. This phenomenon, which is often called superconvergence \cite{devore2021neural}, also occurs for very deep ReLU$^k$ networks. However, it comes at the cost of parameters which are not encodable using a fixed number of bits and thus is not practically realizable \cite{yarotsky2020phase,siegel2023optimal}.

\begin{remark}
    Compared with the approximation theory KAT in Theorem \ref{approx thm} stated for compositionally smooth functions where the curse of dimensionality does not appear in the convergence rate,  Corollary \ref{corollary} is stated for a larger class of Sobolev functions.
\end{remark}

\section{Spectral Bias}
In this section, we study the spectral bias of KANs and compare it to that of MLPs. The spectral bias, or frequency principle, refers to the observation that neural networks trained with gradient descent tend to be biased toward lower frequencies, i.e. that lower frequencies are learned first. This phenomenon has been well-documented and studied for MLPs, see for instance \cite{rahaman2019spectral,zhang2021understanding,xu2019frequency,xu2019training,hong2022activation,poggio2018theory,cai2019multi,basri2020frequency,fridovich2022spectral,zhang2023shallow}, and it is our purpose here to develop an analogous theory for KANs. We remark that although the spectral bias is considered a form of regularization which is desirable for machine learning tasks \cite{rahaman2019spectral,zhang2021understanding,poggio2018theory,zhang2020rethink,fridovich2022spectral}, for scientific computing applications it is typically important to capture all frequencies and so the spectral bias may negatively affect the performance of neural networks for such applications \cite{wang2022and,rathorechallenges,cai2019multi,hong2022activation}. 
\subsection{Spectral bias theory for shallow KANs}
We consider the spectral bias properties of KANs with a single layer. This theory is very similar to the theory developed in \cite{hong2022activation,zhang2023shallow,ronen2019convergence} for the spectral bias of single hidden layer MLPs. The key observation is that a single layer KAN is a linear model. In particular, we see that if $L = 1$ then the KAN applied to an input $\mat{x}\in \mathbb{R}^d$ is (for simplicity, we consider the case of a KAN without the SiLu non-linearity)
\begin{equation}\label{linear-KAN-representation}
{\rm KAN}(\mat{x},\theta)_i=\sum_{j=1}^d\sum_{l=1}^{G+k-1} c_{ijl}B_l(\mat{x}_j),
\end{equation}
where $\theta = \{c_{ijl}\}$ are the parameters of the KAN. Here $i=1,...,d'$ where $d'$ is the dimension of the output, $j = 1,...,d$, and $l = 1,...,G+k-1$. Note that the only parameter here is the grid size $G$, since the width is determined by the input and output dimensions.

Based upon this, we can analyze least squares fitting with shallow KANs. In particular, let $\Omega = [-1,1]^d$ be the (symmetric) unit cube in $\mathbb{R}^d$, let $f:\Omega\rightarrow \mathbb{R}^{d'}$ be a target function we are trying to learn, and consider the (continuous) least squares regression loss
\begin{equation}\label{least-squares-fitting}
    L(\theta) = \int_{\Omega}\|f(\mat{x}) - {\rm KAN}(\mat{x},\theta)\|^2d\mat{x}.
\end{equation}
Due to the representation \eqref{linear-KAN-representation}, this loss function is quadratic in the parameters $\theta$. Let $M$ denote the corresponding Hessian matrix, i.e. so that $$L(\theta) = (1/2)\theta^TM\theta + b^T\theta.$$ This Hessian matrix (indexed by $i,j,l$) is given by
\begin{equation}\label{hessian-matrix-equation}
    M_{(i,j,l),(i',j',l')} = \begin{cases}
        \int_\Omega B_l(\mat{x}_j)B_{l'}(\mat{x}_{j'})d\mat{x} & i = i'\\
        0 & i\neq i'.
    \end{cases}
\end{equation}

The convergence of gradient descent on the least squares regression is determined by the eigen decomposition of the Hessian matrix $M$ which is estimated in the following theorem. {The theorem is a generalization of the fact that the Gram matrix of the B-spline basis is well conditioned (see for instance \cite{devore1993constructive}, Theorem 4.2 in Chapter 5).}
\begin{thm}\label{hessian-eigenvalue-bound-theorem}
    Given a single hidden layer KAN with grid size $G$, degree $k$ B-splines, input dimension $d$ and output dimension $d'$, let $M$ denote the Hessian matrix defined in \eqref{hessian-matrix-equation} corresponding to the least squares fitting problem \eqref{least-squares-fitting}. Then the eigenvalues $0\leq \lambda_1(M) \leq \cdots \leq \lambda_{N}(M)$ (here $N = (G+k-1)dd'$) satisfy
    \begin{equation}
        \frac{\lambda_{N}(M)}{\lambda_{d'(d-1) + 1}(M)} \leq Cd
    \end{equation}
    for a constant $C$ depending only on the spline degree $k$.
\end{thm}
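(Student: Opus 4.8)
\emph{Proof strategy.} The plan is to reveal a tensor-product structure hidden in the Hessian $M$ and thereby reduce the statement to the classical well-conditioning of the univariate $B$-spline basis. Write $n := G+k-1$ for the number of spline basis functions per input coordinate, let $\tilde{M}$ be the Gram matrix $\tilde{M}_{ll'} = \int_{-1}^{1} B_l(t)B_{l'}(t)\,dt$ of these splines on $[-1,1]$, and let $v\in\mathbb{R}^{n}$ be the vector of integrals $v_l = \int_{-1}^{1} B_l(t)\,dt$. By \eqref{hessian-matrix-equation}, $M$ is block diagonal over the output index $i$, being $d'$ identical copies of a matrix $M_0$ indexed by the pair $(j,l)$. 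Integrating out the $d-1$ coordinates different from $x_j$ (each contributes a factor $|[-1,1]| = 2$), the $(j,j')$ block of $M_0$ equals $2^{d-1}\tilde{M}$ if $j=j'$ and $2^{d-2}vv^{T}$ if $j\neq j'$. Pulling out the global scalar $2^{d-2}$, which cancels in the eigenvalue ratio, we obtain $M_0 = I_d\otimes A + (\mathbf{1}_d\mathbf{1}_d^{T})\otimes(vv^{T})$, where $A := 2\tilde{M} - vv^{T}$ and $\mathbf{1}_d$ denotes the all-ones vector in $\mathbb{R}^d$.

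Next I would diagonalize in the coordinate index via the splitting $\mathbb{R}^d = \mathrm{span}(\mathbf{1}_d)\oplus\mathbf{1}_d^{\perp}$: on the $(d-1)$-dimensional subspace $\mathbf{1}_d^{\perp}$ the factor $\mathbf{1}_d\mathbf{1}_d^{T}$ vanishes, so $M_0$ acts there as $d-1$ independent copies of $A$, while on $\mathrm{span}(\mathbf{1}_d)$ it acts as $A + d\,vv^{T} = 2\tilde{M} + (d-1)vv^{T}$. Restoring the output multiplicity, the spectrum of $M$ is exactly: the eigenvalues of $A$, each with multiplicity $d'(d-1)$, together with the eigenvalues of $2\tilde{M} + (d-1)vv^{T}$, each with multiplicity $d'$.

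The analytic input is the classical $B$-spline stability estimate (see \cite{devore1993constructive}, Theorem 4.2 in Chapter 5): with mesh size $h = 2/G$, every eigenvalue of $\tilde{M}$ lies in $[c_1 h, c_2 h]$ for constants $c_1,c_2>0$ depending only on $k$; moreover the partition of unity $\sum_l B_l\equiv 1$ on $[-1,1]$ gives $\|v\|_1 = 2$ and $\|v\|_\infty\le (k+1)h$, hence $\|v\|^2\le\|v\|_1\|v\|_\infty\le 2(k+1)h$. Writing $g_c := \sum_l c_l B_l$, one has $c^{T}A\,c = 2\|g_c\|^2_{L^2([-1,1])} - \big(\int_{-1}^{1} g_c\big)^2\ge 0$ by Cauchy--Schwarz, with equality exactly when $g_c$ is constant, i.e.\ (by uniqueness of $B$-spline coefficients together with $\sum_l B_l\equiv 1$) exactly when $c\in\mathrm{span}(\mathbf{1}_n)$; thus $A\succeq 0$ and $\ker A = \mathrm{span}(\mathbf{1}_n)$. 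By Weyl's inequality, subtracting the rank-one positive semidefinite matrix $vv^{T}$ from $2\tilde{M}$ can drop at most one eigenvalue below the floor $2c_1 h$, so the remaining $n-1$ eigenvalues of $A$ still lie in $[2c_1 h, 2c_2 h]$; similarly $2\tilde{M} + (d-1)vv^{T}$ has smallest eigenvalue $\ge 2c_1 h$ and largest eigenvalue $\le 2c_2 h + (d-1)\|v\|^2\le (2c_2 + 2(k+1))\,dh$. Assembling the pieces, $M$ has exactly $d'(d-1)$ zero eigenvalues, its smallest nonzero eigenvalue $\lambda_{d'(d-1)+1}(M)$ is at least $2c_1 h$, and its largest eigenvalue $\lambda_N(M)$ is at most $C'dh$ with $C'$ depending only on $k$; dividing gives $\lambda_N(M)/\lambda_{d'(d-1)+1}(M)\le (C'/2c_1)\,d = Cd$.

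The step I expect to be the main obstacle is the spectral analysis of $A = 2\tilde{M} - vv^{T}$: one must show that $A$ has a one-dimensional kernel --- spanned by $\mathbf{1}_n$, which encodes the single linear relation $\sum_l B_l\equiv 1$ obeyed by the $B$-splines in each coordinate and is precisely what accounts for the $d'(d-1)$ eigenvalues skipped in the statement --- while every other eigenvalue of $A$ remains of the same order $h$ as those of $\tilde{M}$. This is where the well-conditioning of the $B$-spline Gram matrix and Weyl's inequality both enter; the rest of the argument is bookkeeping of Kronecker factors and multiplicities.
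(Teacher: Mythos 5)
Your proof is correct, and it reaches the theorem by a genuinely different linear-algebraic route than the paper's. Both arguments start from the same observation that $M$ is block diagonal over the output index with diagonal sub-blocks equal to the univariate B-spline Gram matrix and off-diagonal sub-blocks equal to the rank-one matrix $vv^T$, and both rest on the same analytic input (de Boor's stability of the B-spline basis, \cite{devore1993constructive} Thm.\ 4.2, Ch.\ 5). The paper, however, never diagonalizes: it proves the two one-sided bounds $\lambda_{\max}(B)\le d\,\lambda_{\max}(C)$ and $\lambda_d(B)\ge\lambda_{\min}(C)$ via Courant--Fischer on the subspace $\{(w_1,\dots,w_d):v^Tw_i=0\}\oplus\mathrm{span}(v\otimes\mathbf{1})$ together with the comparison $vv^T\preceq C$ from Cauchy--Schwarz. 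You instead write the block as $I_d\otimes A+(\mathbf{1}\mathbf{1}^T)\otimes vv^T$ with $A=2\tilde M-vv^T$ and split $\mathbb{R}^d=\mathrm{span}(\mathbf{1})\oplus\mathbf{1}^\perp$, obtaining the \emph{exact} spectrum of $M$ as the union of the spectra of $A$ (multiplicity $d'(d-1)$) and of $2\tilde M+(d-1)vv^T$ (multiplicity $d'$). This buys more than the paper proves: it exhibits the $d'(d-1)$ degenerate directions explicitly (the paper only asserts their existence in a remark) and localizes all remaining eigenvalues at the scale $h$. The price is that you need the two-sided $h$-scaling of the Gram eigenvalues, the estimate $\|v\|^2\le 2(k+1)h$, and the rank-one interlacing step, whereas the paper only needs the condition-number bound for $C$. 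One caveat worth flagging: your identification $\ker A=\mathrm{span}(\mathbf{1}_n)$ assumes the partition of unity holds exactly for the basis functions actually summed in \eqref{linear-KAN-representation} (the paper's indexing of the spline basis differs by one between \eqref{KAN-activation-function} and \eqref{linear-KAN-representation}); but, as your own assembly step shows, the bound is insensitive to this, since at most $d'(d-1)$ eigenvalues of $M$ can fall below $2c_1h$ in either case.
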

{Spectral bias refers to the fact that lower frequencies converge much faster in training, which is related to the ill-condition property of the Hessian matrix in shallow models.}
Theorem \ref{hessian-eigenvalue-bound-theorem} shows that away from $d'(d-1)$ eigenvectors the matrix $M$ is well conditioned. This means that gradient descent will converge at the same rate in all directions orthogonal to these $d'(d-1)$ eigenvectors. Note that since the number of eigenvectors we must remove is independent of the grid size $G$, we expect that when $G$ is relatively large most components of the KAN will converge at roughly the same rate toward the solution. Thus the KAN with a large number of grid points will not exhibit the same spectral bias toward low frequencies seen by MLPs. \begin{remark}
    {We remark that in constrast the Hessian associated with a two layer ReLU MLP with width $n$ has a condition number which scales like $n^4$ \cite{hong2022activation}, which explains the strong spectral bias exhibited by ReLU MLPs.} {On the other hand, as established in Remark 
14 in \cite{zhang2023shallow}, ReLU$^k$ MLP with width $n$ has a condition number at the order of $n^{2+2k}$. One can also observe in Sec 4.3 that they perform even worse for high-frequency tasks.}
\end{remark}
\begin{remark}
    {We remark that in \cite{wang2024kolmogorov} a spectral bias analysis for infinite width KANs is performed empirically using an analysis of the associated neural tangent kernel for specific tasks. Theorem \ref{hessian-eigenvalue-bound-theorem} complements this work by giving an analysis in the finite width regime as well. }
\end{remark}
\begin{remark}
    We note that the $d'(d-1)$ eigenvectors which must be excluded is not an artifact of the proof. In fact, this is due to the fact that the KAN parametrization is not unique. Indeed, the constant function $f(x) = 1$ can be parameterized in $d$ different ways by using the B-splines in each of the $d$ different directions. This ambiguity gives rise to directions in parameter space where the function parameterized by the KAN doesn't change and this results in degenerate eigenvectors of the matrix $M$.  
\end{remark}
The analysis given is necessarily highly simplified and heuristic. In particular, we only analyze a single layer of the KAN network and consider the continuous least squares loss. Nonetheless, we argue that it gives an explanation for why we would expect KANs to have a significantly different spectral bias than MLPs, and in particular why we expect that they learn all frequencies roughly similarly. In the remainder of this section, we experimentally test this hypothesis and compare the spectral bias of KANs with MLPs on a variety of simple problems. We implement these numerical experiments using the pykan package version $0.2.5$.



\subsection{1D waves of different frequencies}
In the first example, we take the same setting as in \cite{rahaman2019spectral} and study the regression of a linear combination of waves of different frequencies.
Consider the function prescribed as 
$$f(x)=\sum A_i \sin \left(2 \pi k_i z+\varphi_i\right),\quad k=(5,10,\cdots,45,50).$$
The phases $\varphi_i$ are uniformly sampled from $[0,2\pi]$ and we consider two cases of amplitudes: one with equal amplitude $A_i=1$ and another with increasing amplitude $A_i=0.1i$. We use a neural network, either ReLU MLP or KAN, to regress $f$ sampled at $200$ uniformly spaced points in $[0,1]$, with full batch ADAM iteration as the optimizer with a learning rate of $0.0003$. For MLPs, we train with $80000$ iterations as in \cite{rahaman2019spectral}; for KANs, we only train with 8000 iterations. Normalized magnitudes of discrete Fourier
transform at frequencies $k_i$ are computed as $|\tilde{f}_{k_i}/A_i|$ and averaged over 10 runs of different phases.

We plot the evolution of $|\tilde{f}_{k_i}/A_i|$ during training across all frequencies; see Figure \ref{fig:1D_eqn} and  Figure \ref{fig:1D_inc} in the appendix \ref{wav:app} for comparisons of MLPs and KANs with different sizes for equal and increasing amplitudes respectively. KANs suffer significantly less than MLPs from spectral biases. Once the size of KANs, especially the grid size and depth is large enough, KANs almost learn all frequencies at the same time, while even very deep and wide MLPs still have difficulties learning higher frequencies, even with $10$x epochs!

\begin{figure}[htbp]
    \centering
    \begin{minipage}[b]{0.32\textwidth}
        \centering
        \includegraphics[width=\textwidth]{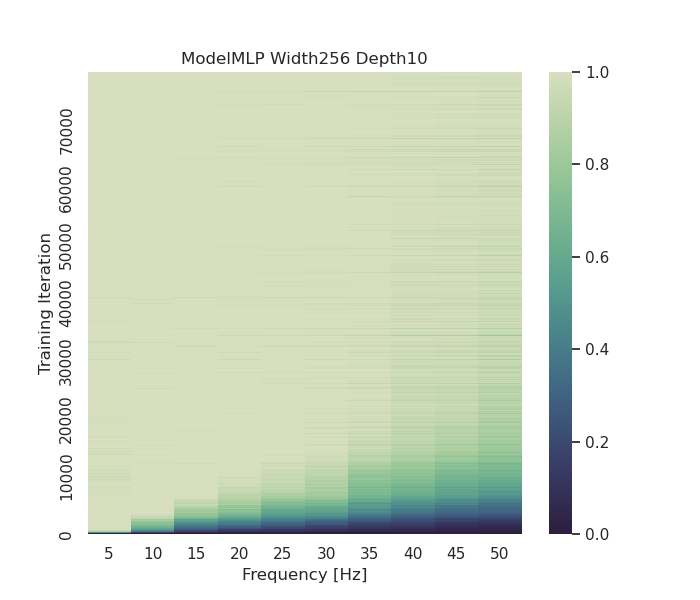}
    
    \end{minipage}
    \hfill
    \begin{minipage}[b]{0.32\textwidth}
        \centering
        \includegraphics[width=\textwidth]{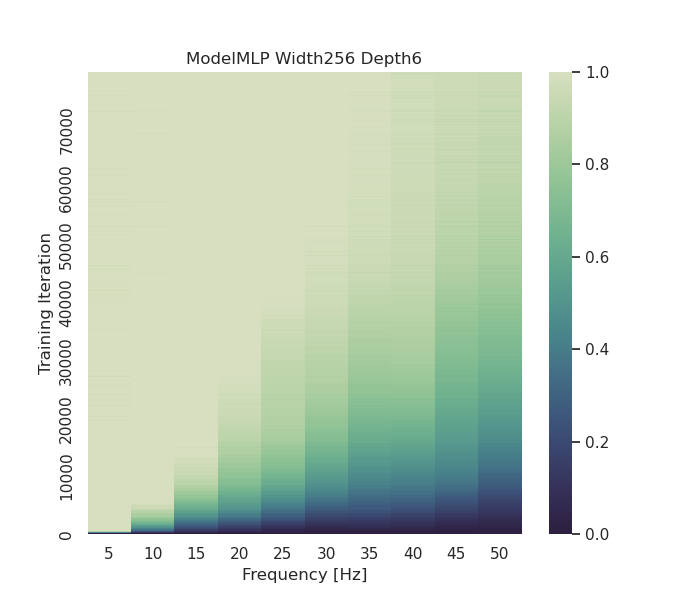}
    \end{minipage}
    \hfill
    \begin{minipage}[b]{0.32\textwidth}
        \centering
        \includegraphics[width=\textwidth]{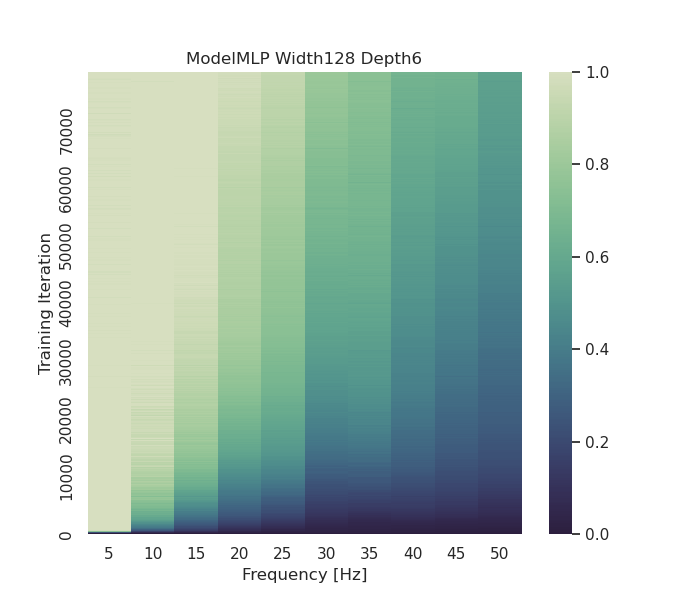}
    \end{minipage}

    \vspace{1em}

    \begin{minipage}[b]{0.32\textwidth}
        \centering
        \includegraphics[width=\textwidth]{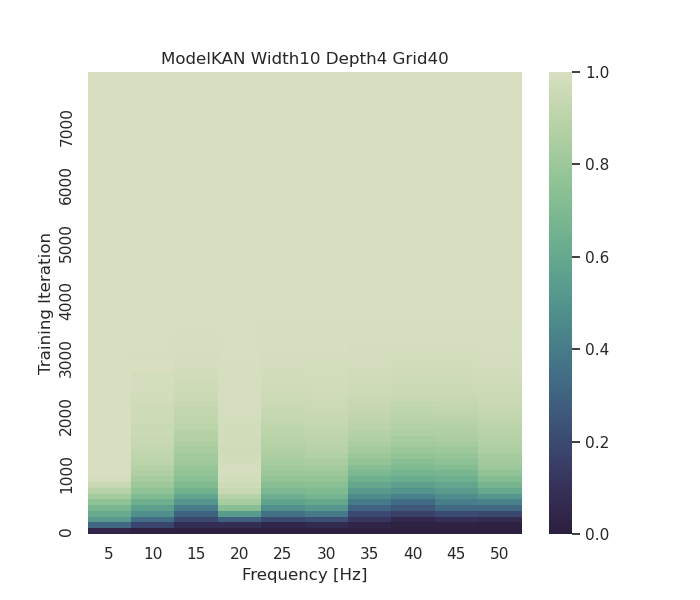}
    \end{minipage}
    \hfill
    \begin{minipage}[b]{0.32\textwidth}
        \centering
        \includegraphics[width=\textwidth]{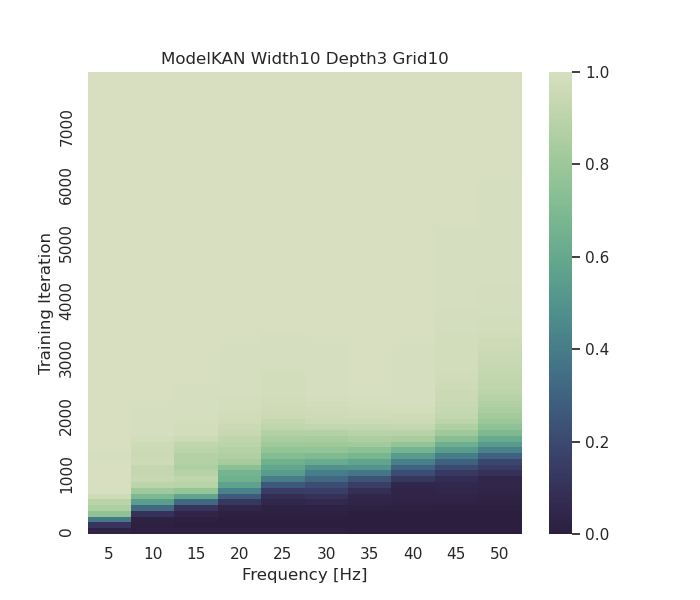}
    \end{minipage}
    \hfill
    \begin{minipage}[b]{0.32\textwidth}
        \centering
        \includegraphics[width=\textwidth]{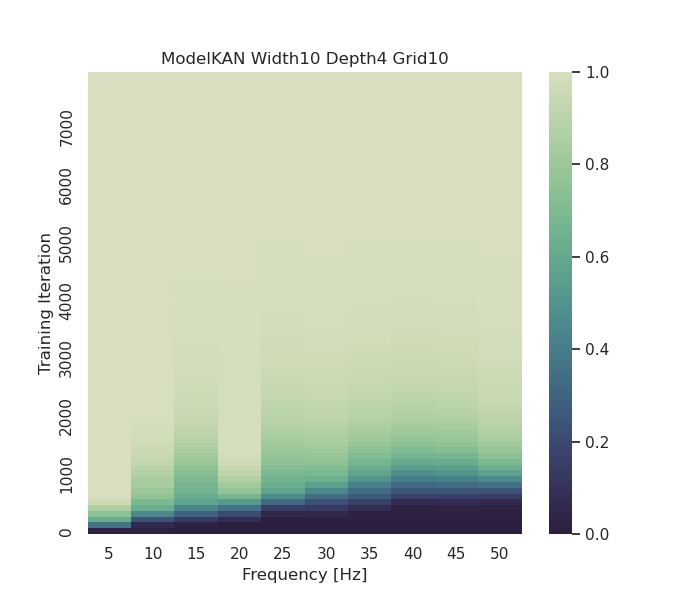}
    
    \end{minipage}
    
    \caption{1D wave dataset, where the target function has equal amplitudes of different frequency modes. Under various hyperparameters, MLPs manifest strong spectral biases (top), while KANs do not (bottom). Note that the y axis (training steps) of MLP is 10 times that of KAN.}
    \label{fig:1D_eqn}
\end{figure}

    


    
    
\subsection{Gaussian random field}
In this example, we consider fitting functions sampled from a Gaussian random field.
The target function $f$ is sampled from a $d$-dimensional Gaussian random field with mean zero and covariance $\exp(-|x-y|^2/(2\sigma^2))$.  Here small $\sigma$ corresponds to rough functions and large $\sigma$ corresponds to smooth functions. 

To approximate the Gaussian random field, we sample $f$ using the KL expansion \cite{karhunen1947under}. We sample $N=5000$ points uniformly from $[-1,1]^d$ and calculate the (empirical) covariance matrix $K$. Then we truncate its first $m<N$ eigenpairs $\lambda_i,\phi_i$, with the cutoff threshold $\lambda_{m+1}<0.1\lambda_{1}\leq\lambda_{m}$ and sample $f$ approximately via $$f=\sum_{i\leq m}\lambda_i\xi_i\phi_i,$$ where $\xi_i$ are i.i.d standard Gaussians $N(0,1)$. For $f$ with different scales $\sigma$ and dimensions $d$, we split the points into $80\%$ training and $20\%$ testing points. We use MLPs and KANs with different sizes to regress on the training set, with the mean squared loss as the loss function. For MLPs, we use $500$ iterations of LBFGS iteration, and for KANs, we use the grid extension technique, with grid sizes $(10,20,30,40,50)$, each trained with $100$ iterations of LBFGS. {We can see that ReLU$^k$ MLPs performs even worse than a standard MLP.}

We plot the loss curves here and compare the losses of different scales $\sigma$ and dimensions $d$, using an MLP of $256$ neurons in each hidden layer and optimized over its depth, and KANs with $10$ neurons in each hidden layer and $2,3,4$ layers; see Figure \ref{grf:train} for the regression loss on the training set with dimensions $2,3,4$ and scales $2^i$, $i=0,-1,-2,-3$. We see that for larger scale and smoother functions, MLP performs better, while for smaller scale and rough functions, KANs perform better without suffering much from spectral biases, and grid extension is especially helpful. We remark that one can choose bigger grid sizes of KANs for smoother functions and obtain more accurate regressions.

Precisely since KANs are not susceptible to spectral biases, they are likely to overfit to noises. As a consequence,
we notice that KANs are more subject to overfitting on the training data regression when the task is very complicated; see the second line of Figure \ref{grf:test}. On the other hand, we can increase the number of training points to alleviate the overfitting; see the last line of Figure \ref{grf:test} where we increased the number of training and test samples by $10$x. We remark that the current implementation of grid extension is prone to oscillation after refining grids during the undersampled regime, as observed in \cite{rigas2024adaptive}, and we will improve it in future works. 
\begin{figure}[htbp]
    \centering
    \begin{minipage}[b]{0.24\textwidth}
        \centering
        \includegraphics[width=\textwidth]{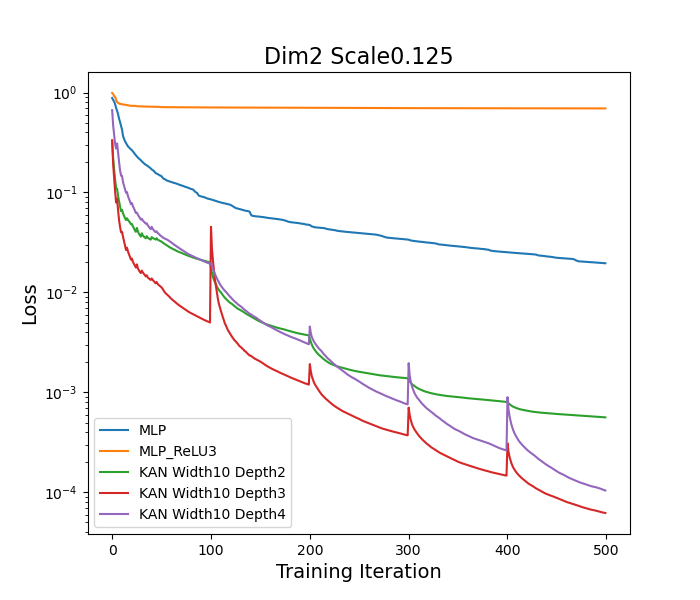}
    
    \end{minipage}
    \hfill
    \begin{minipage}[b]{0.24\textwidth}
        \centering
        \includegraphics[width=\textwidth]{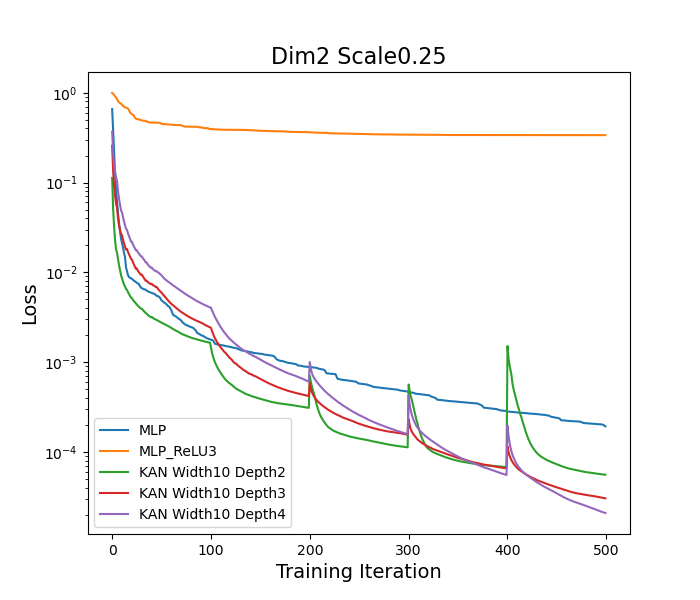}
    \end{minipage}
    \hfill
    \begin{minipage}[b]{0.24\textwidth}
        \centering
        \includegraphics[width=\textwidth]{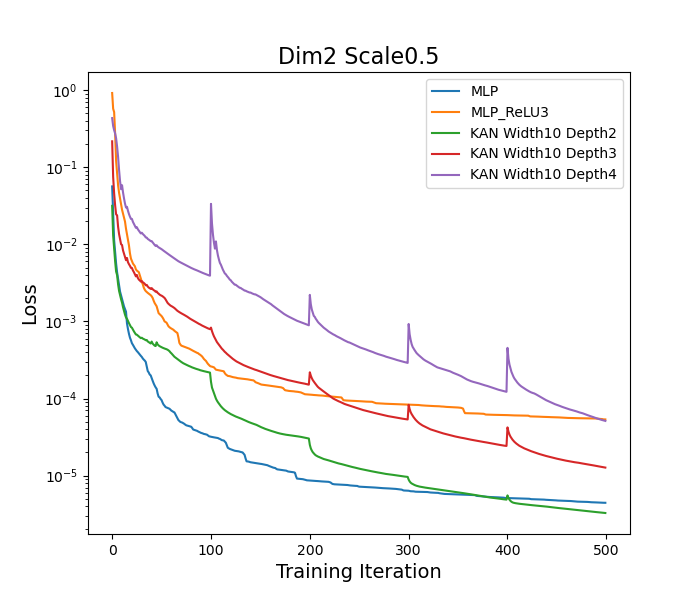}
    \end{minipage}
    \hfill
    \begin{minipage}[b]{0.24\textwidth}
        \centering
        \includegraphics[width=\textwidth]{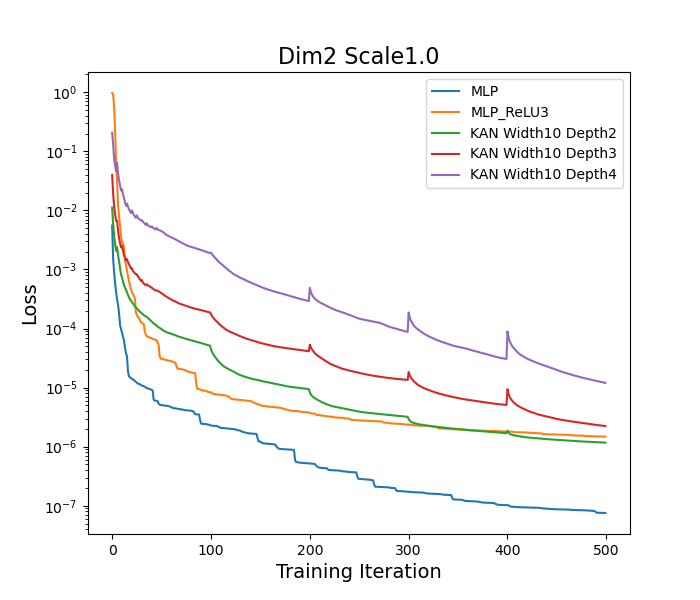}
    \end{minipage}
    \vspace{1em}
\begin{minipage}[b]{0.24\textwidth}
        \centering
        \includegraphics[width=\textwidth]{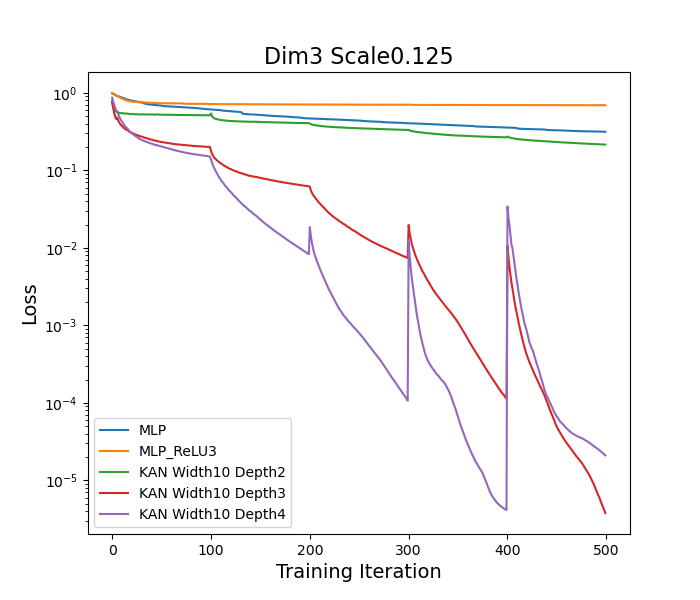}
    
    \end{minipage}
    \hfill
    \begin{minipage}[b]{0.24\textwidth}
        \centering
        \includegraphics[width=\textwidth]{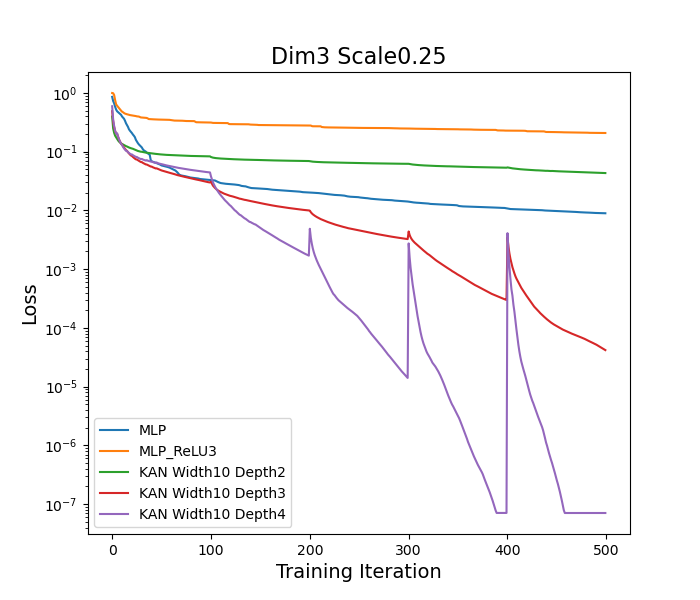}
    \end{minipage}
    \hfill
    \begin{minipage}[b]{0.24\textwidth}
        \centering
        \includegraphics[width=\textwidth]{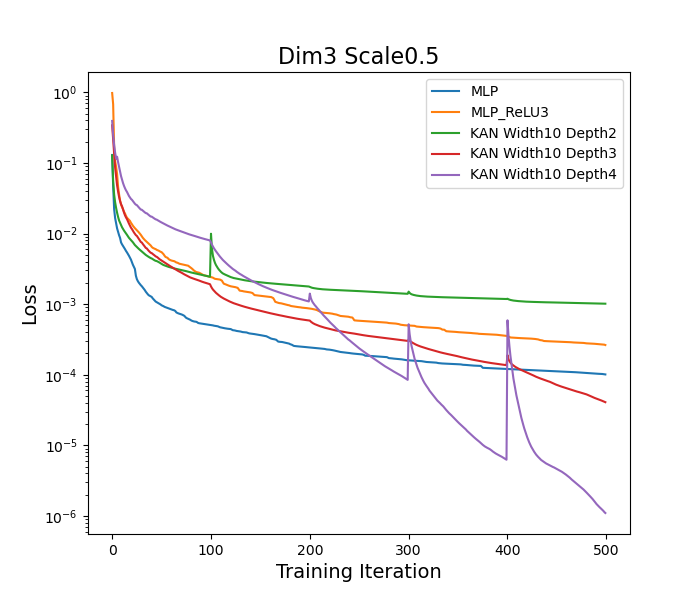}
    \end{minipage}
    \hfill
    \begin{minipage}[b]{0.24\textwidth}
        \centering
        \includegraphics[width=\textwidth]{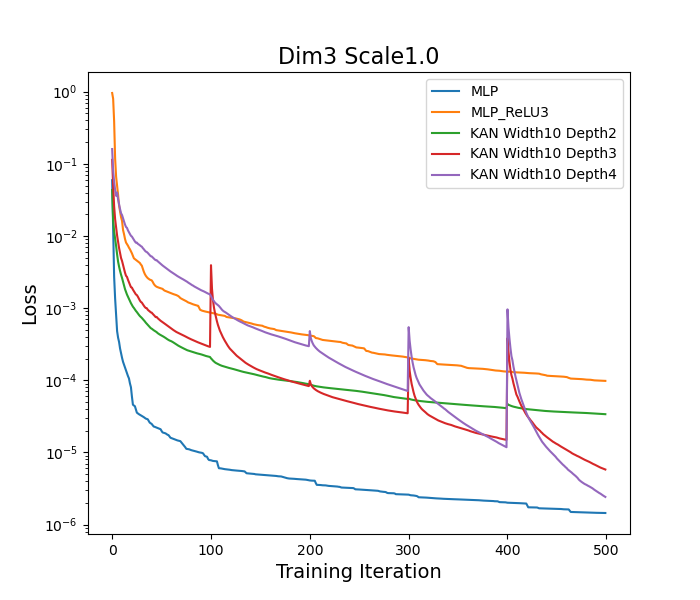}
    \end{minipage}
    \vspace{1em}\begin{minipage}[b]{0.24\textwidth}
        \centering
        \includegraphics[width=\textwidth]{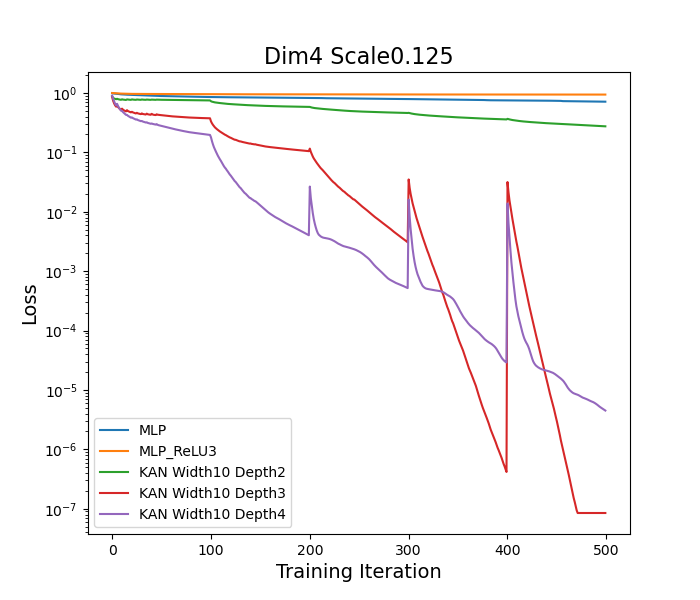}
    
    \end{minipage}
    \hfill
    \begin{minipage}[b]{0.24\textwidth}
        \centering
        \includegraphics[width=\textwidth]{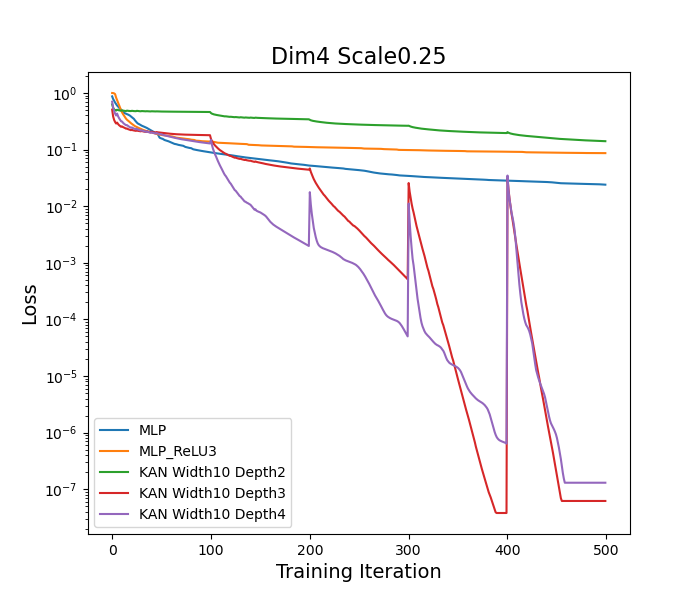}
    \end{minipage}
    \hfill
    \begin{minipage}[b]{0.24\textwidth}
        \centering
        \includegraphics[width=\textwidth]{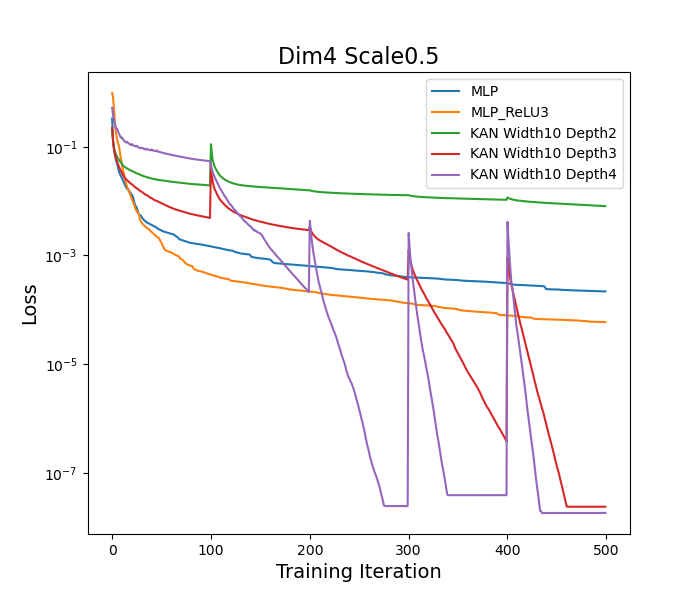}
    \end{minipage}
    \hfill
    \begin{minipage}[b]{0.24\textwidth}
        \centering
        \includegraphics[width=\textwidth]{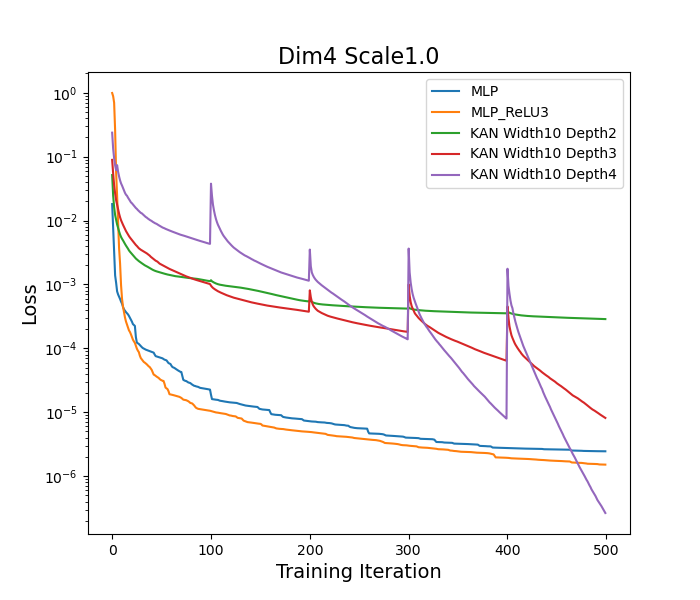}
    \end{minipage}
    \vspace{1em}

    \caption{The Gaussian random field dataset. Training losses of MLP and KANs, with different scales and dimensions.}
    \label{grf:train}
\end{figure}

\begin{figure}[htbp]
    \centering
    \begin{minipage}[b]{0.24\textwidth}
        \centering
        \includegraphics[width=\textwidth]{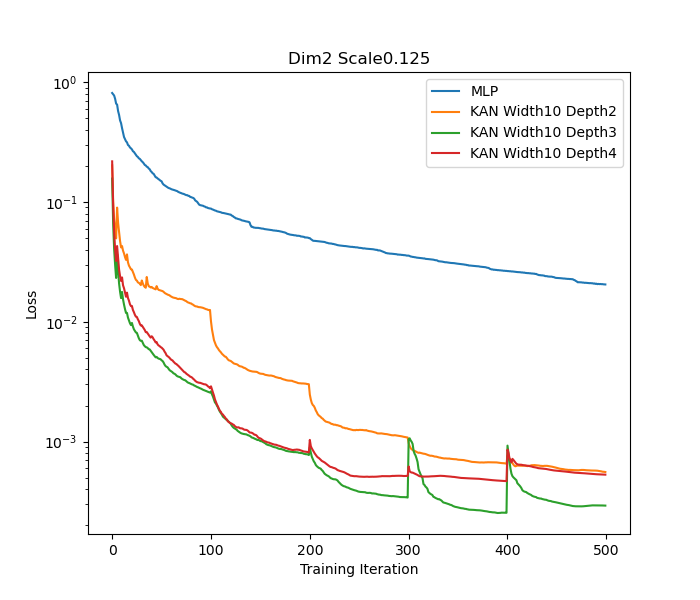}
    
    \end{minipage}
    \hfill
    \begin{minipage}[b]{0.24\textwidth}
        \centering
        \includegraphics[width=\textwidth]{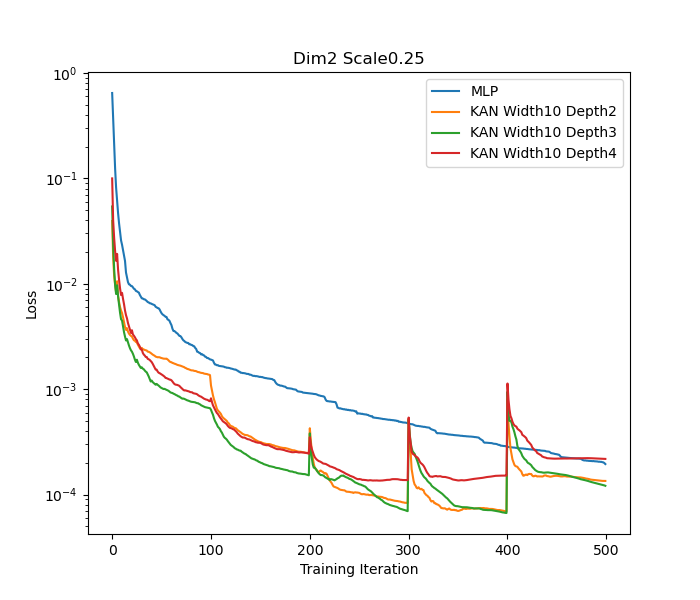}
    \end{minipage}
    \hfill
    \begin{minipage}[b]{0.24\textwidth}
        \centering
        \includegraphics[width=\textwidth]{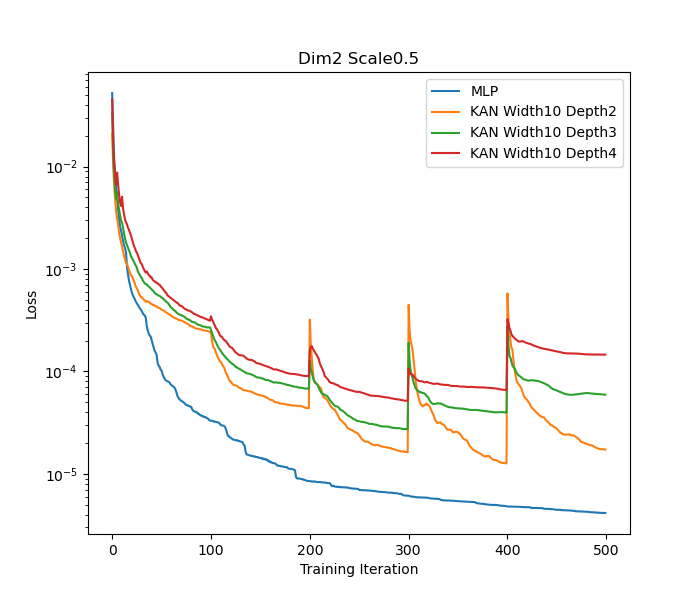}
    \end{minipage}
    \hfill
    \begin{minipage}[b]{0.24\textwidth}
        \centering
        \includegraphics[width=\textwidth]{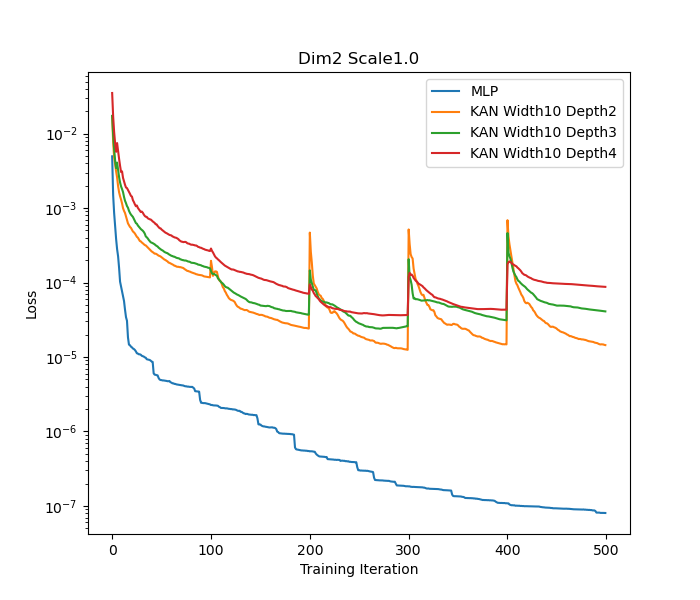}
    \end{minipage}
    \vspace{1em}
\begin{minipage}[b]{0.24\textwidth}
        \centering
        \includegraphics[width=\textwidth]{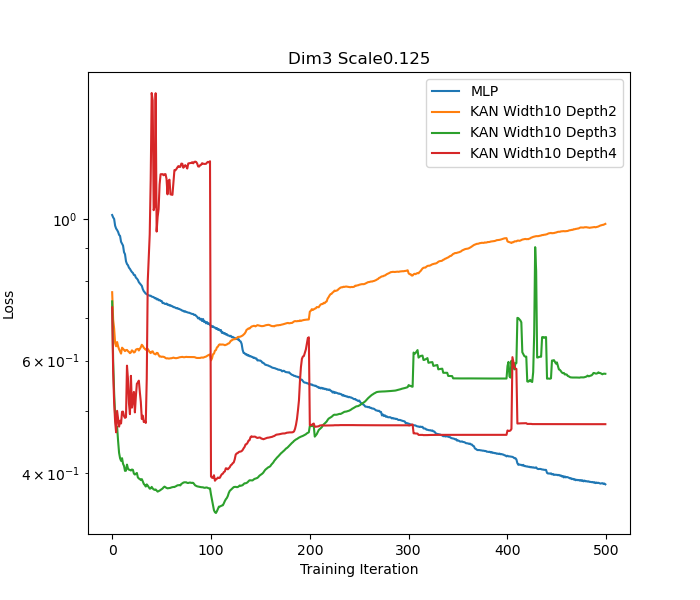}
    
    \end{minipage}
    \hfill
    \begin{minipage}[b]{0.24\textwidth}
        \centering
        \includegraphics[width=\textwidth]{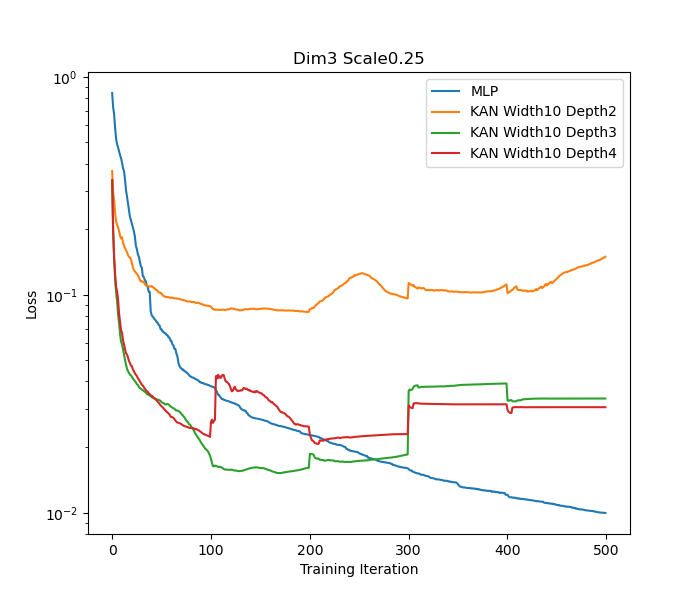}
    \end{minipage}
    \hfill
    \begin{minipage}[b]{0.24\textwidth}
        \centering
        \includegraphics[width=\textwidth]{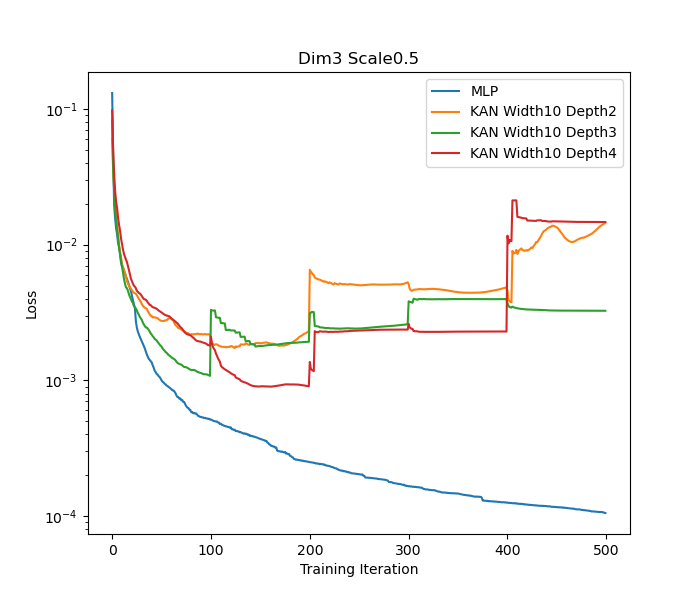}
    \end{minipage}
    \hfill
    \begin{minipage}[b]{0.24\textwidth}
        \centering
        \includegraphics[width=\textwidth]{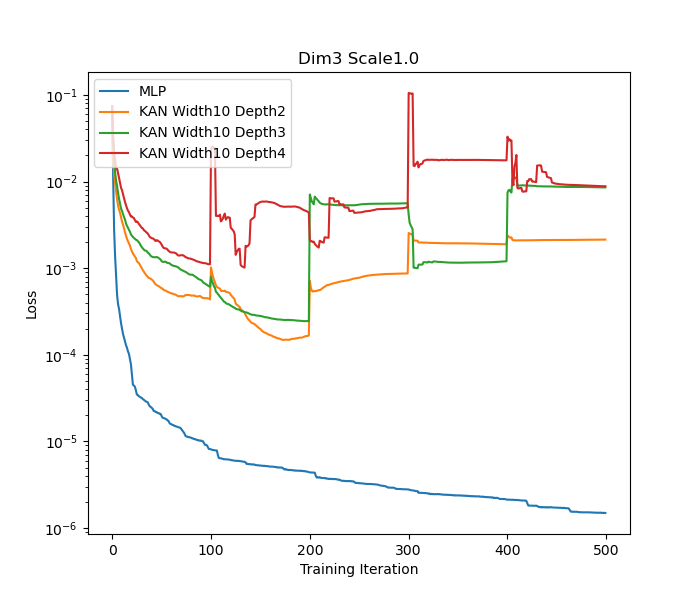}
    \end{minipage}
    \vspace{1em}\begin{minipage}[b]{0.24\textwidth}
        \centering
        \includegraphics[width=\textwidth]{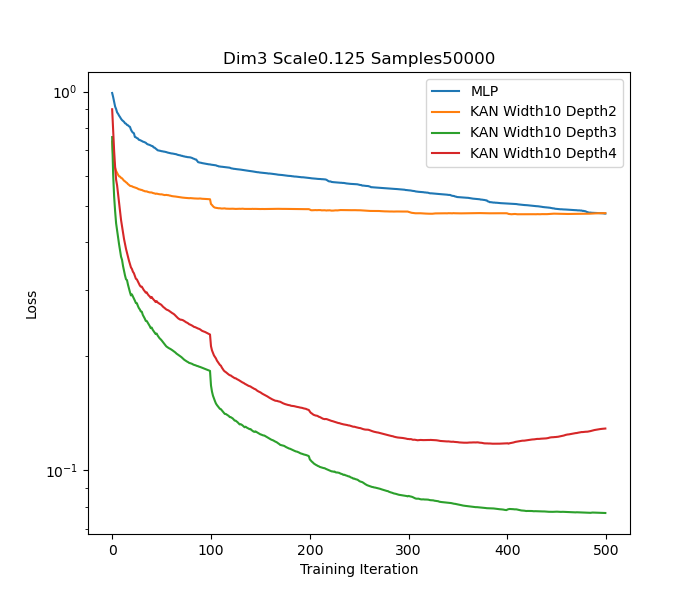}
    
    \end{minipage}
    \hfill
    \begin{minipage}[b]{0.24\textwidth}
        \centering
        \includegraphics[width=\textwidth]{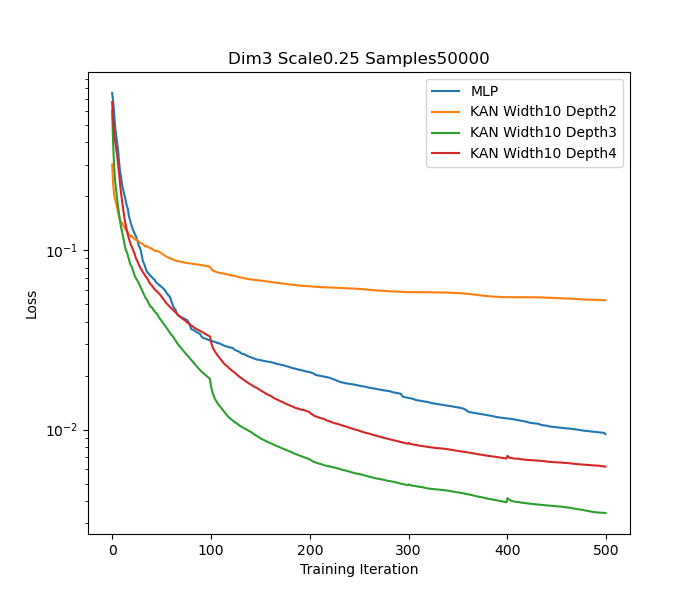}
    \end{minipage}
    \hfill
    \begin{minipage}[b]{0.24\textwidth}
        \centering
        \includegraphics[width=\textwidth]{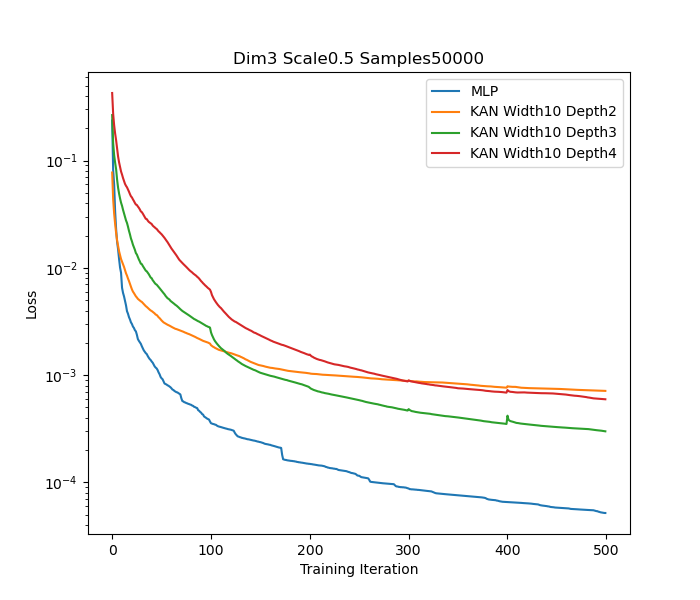}
    \end{minipage}
    \hfill
    \begin{minipage}[b]{0.24\textwidth}
        \centering
        \includegraphics[width=\textwidth]{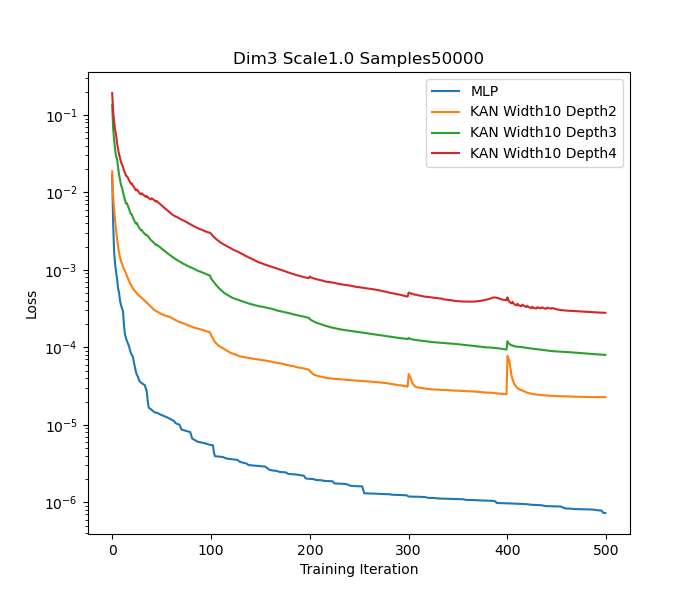}
    \end{minipage}
    \vspace{1em}

    \caption{The Gaussian random field dataset. Test losses of MLP and KANs, with different scales and dimensions. Increasing the number of samples by $10$x helps overfitting.}
    \label{grf:test}
\end{figure}
\subsection{PDE example}
In this example, we solve the 1D Poisson equation with a high-frequency solution, similar to \cite{xu2019frequency}. To be precise, consider the equation with zero Dirichlet boundary condition\begin{equation}
    -u_{xx}=f\,\,\,\, \text{in} [-1,1]\,,\quad u(-1)=u(1)=0\,.
\end{equation}
Here for a frequency $k\in \mathbb{N}$, the right-hand-side and the associated true solution are $$f=\pi^2\sin(\pi x)+\pi^2{k}\sin(k\pi x),\quad u=\sin(\pi x)+\frac{1}{k}\sin(k\pi x).$$
The different frequencies are normalized in a way that for $k> 1$, the ground truth has the same energy ($H^1$) norm. We use the variational form of the elliptic equation and the associated Deep Ritz Method \cite{yu2018deep}. Parametrizing $u$ by a neural network, we minimize the loss $$\lambda \int_{-1}^1 \left(\frac{1}{2}u_x^2-fu\right) dx+u^2(-1)+u^2(1).$$

For frequencies $k=2,4,8,16,32$, we use $2000$ uniformly spaced sample points and the neural network using an MLP of 6 layers and 256 neurons in each hidden layer and a KAN of 2 layers with 10 neurons in the hidden layer. We choose the hyperparameter $\lambda=0.01$ balancing the energy and boundary loss and perform LBFGS iterations. For MLPs, we use 200 iterations, and for KANs, we use grid sizes $(20,40)$, each trained with 100 iterations. We plot the relative $L^2$ and $H^1$ losses compared to the ground truth in Figure \ref{fig:drm}. We can see that KANs perform consistently better, and the residue barely deteriorates when the frequency increases, whereas it becomes extremely hard for MLPs to optimize when $k=16,32.$ We refer a 2D case to the appendix \ref{poi:app}.

\begin{figure}[htbp]
    \centering
    \begin{minipage}[b]{0.32\textwidth}
        \centering
        \includegraphics[width=\textwidth]{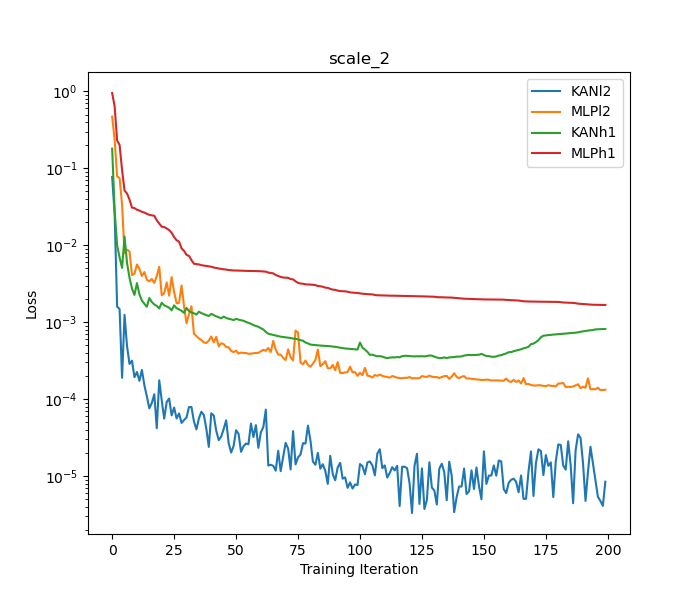}
    
    \end{minipage}
    \hfill
    \begin{minipage}[b]{0.32\textwidth}
        \centering
        \includegraphics[width=\textwidth]{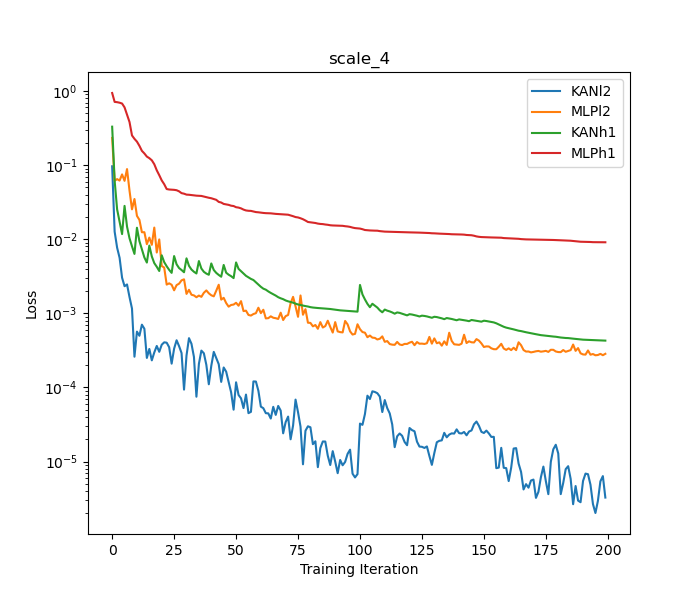}
    \end{minipage}
    \hfill
    \begin{minipage}[b]{0.32\textwidth}
        \centering
        \includegraphics[width=\textwidth]{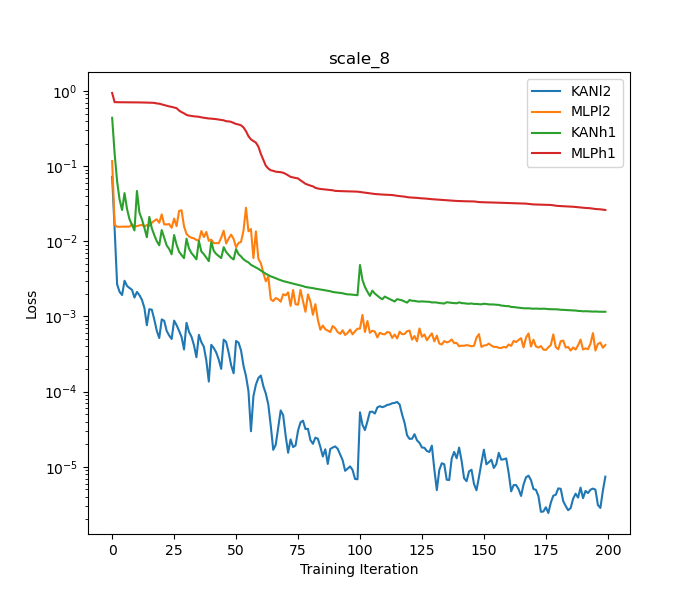}
    \end{minipage}

    \vspace{1em}

    \begin{minipage}[b]{0.48\textwidth}
        \centering
        \includegraphics[width=\textwidth]{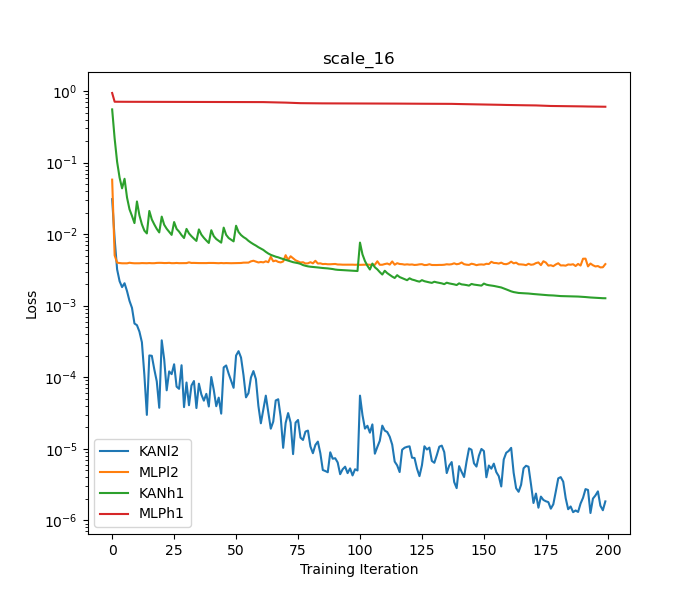}
    \end{minipage}
    \hfill
    \begin{minipage}[b]{0.48\textwidth}
        \centering
        \includegraphics[width=\textwidth]{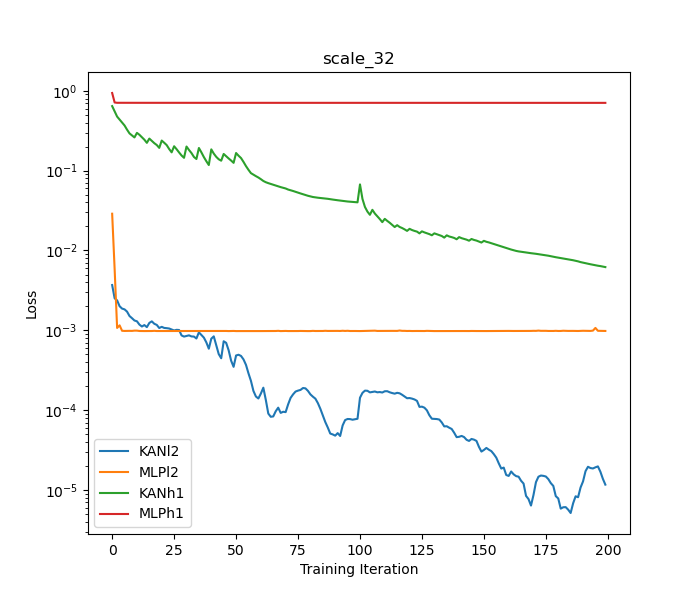}
    
    \end{minipage}
    
    \caption{Solving PDEs. $L^2$ and $H^1$ losses of MLP and KAN with different frequencies of the solution.}
    \label{fig:drm}
\end{figure}
{\section{Concluding Remarks}
    In this work, we have compared the approximation and representation properties of KANs and MLPs. Based upon our theoretical and empirical analysis, we conclude that while KANs and MLPs are very similar from the perspective of approximation theory, i.e. the both parameterize similar classes of functions, they differ significantly in terms of their training dynamics. Specifically, KANs do not exhibit the same spectral bias toward low frequencies that MLPs do. Future work includes developing theory which can describe the training of deeper KANs, performing a more extensive experimental investigation of the spectral bias of KANs on more complicated problems, and investigating whether the reduced spectral bias of KANs improves their performance on scientific computing applications such as solving PDEs.
}

\newpage
\bibliographystyle{iclr2025_conference}

\bibliography{ref}

\appendix

\section{2D Poisson equation with high frequency components}\label{poi:app}
{In this experiment, we solve a 2D Poisson equation with a high-frequency solution using the Deep Ritz method. The equation with  zero Dirichlet boundary condition has the form \begin{equation}
    -u_{xx}-u_{yy}=f\,\,\,\, \text{in} [0,1]^2\,,\quad u(x,0)=u(x,1)=u(0,y)=u(1,y)=0\,.
\end{equation} with a right-hand side and a corresponding exact solution depending on the frequency $k$ as$$f=2\pi^2\sin(\pi x)\sin(\pi y)+2\pi^2{k}\sin(k\pi x)\sin(k\pi y),\quad u=\sin(\pi x)\sin(\pi y)+\frac{1}{k}\sin(k\pi x)\sin(k\pi y).$$
For frequencies $k=2,4,16$, we use a uniform grid with $101$ sample points per direction with a MLP of 6 layers and 256 neurons in each hidden layer and a KAN of 2 layers
with 10 neurons in the hidden layer. We use LBFGS for 50 iterations and plot the loss compared to the ground truth in Figure \ref{2dlosses}. We observe similarly as in the 1D case that KANs outform MLPs when solving equation with multiple scales.}

{More importantly, MLPs cannot learn the correct shape, even with $k=4$ after 100 iterations, whereas KANs can learn a solution quite close to the ground truth with just $5$ iterations for $k=4$ and $10$ iterations for $k=16$; see Figure \ref{fig:2ddrm}.}

\begin{figure}[htbp]
    \centering
    \begin{minipage}[b]{0.32\textwidth}
        \centering
        \includegraphics[width=\textwidth]{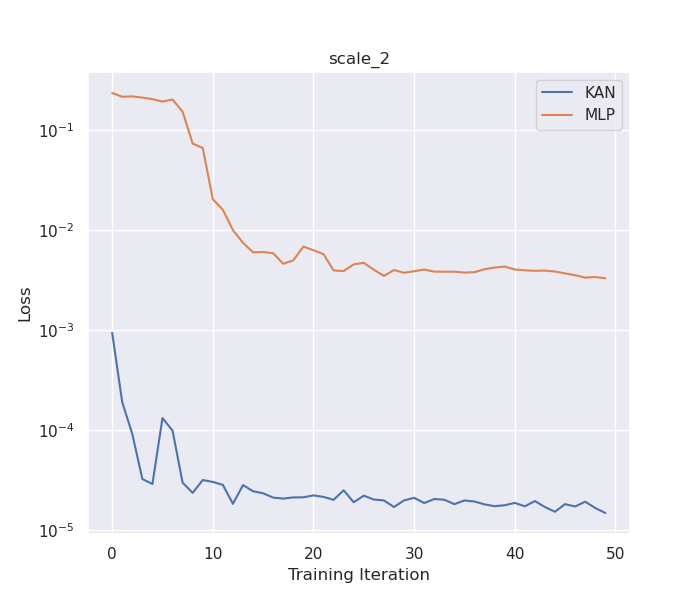}
    
    \end{minipage}
    \hfill
    \begin{minipage}[b]{0.32\textwidth}
        \centering
        \includegraphics[width=\textwidth]{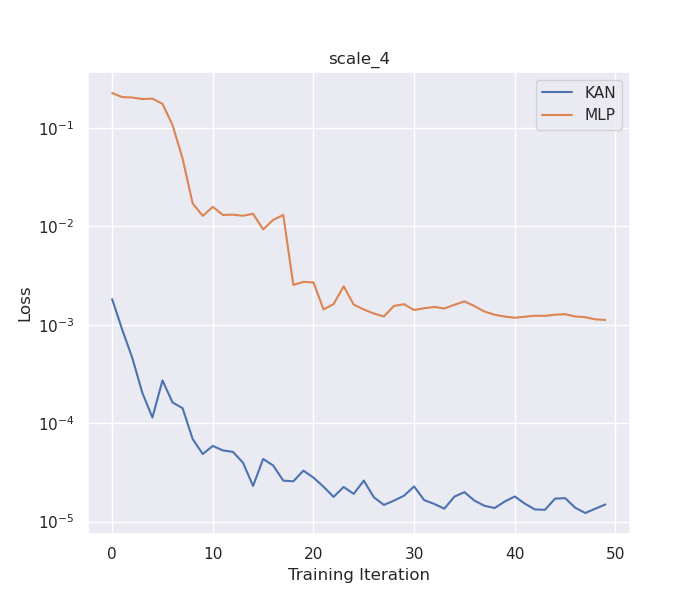}
    \end{minipage}
    \hfill
    \begin{minipage}[b]{0.32\textwidth}
        \centering
        \includegraphics[width=\textwidth]{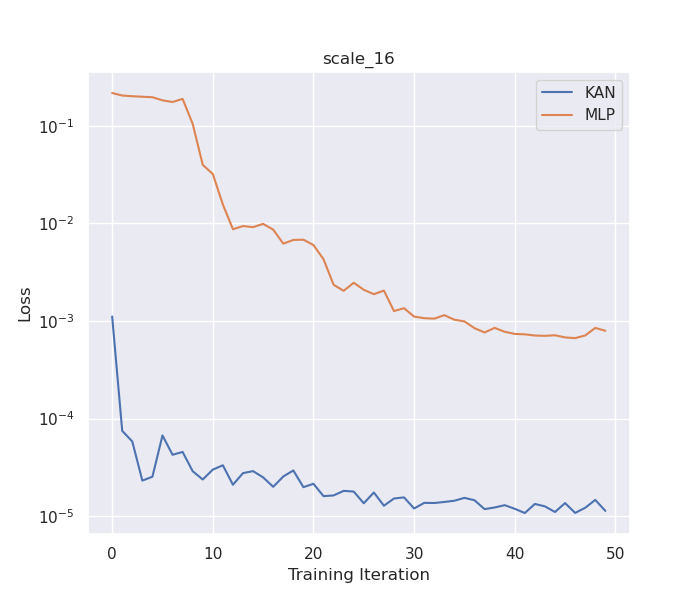}
    \end{minipage}
    \caption{2D Poisson. Losses of MLP and KAN with different frequencies of the
solution.}
    \label{2dlosses}
\end{figure}
\begin{figure}[htbp]
    \centering
    \begin{minipage}[b]{0.32\textwidth}
        \centering
        \includegraphics[width=\textwidth]{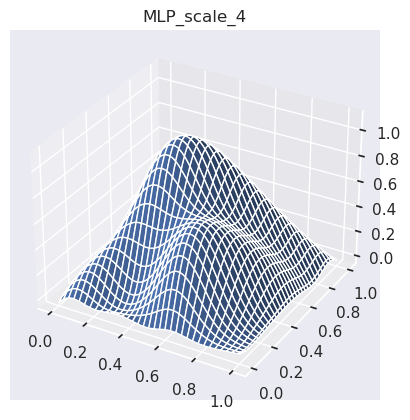}
    
    \end{minipage}
    \hfill
    \begin{minipage}[b]{0.32\textwidth}
        \centering
        \includegraphics[width=\textwidth]{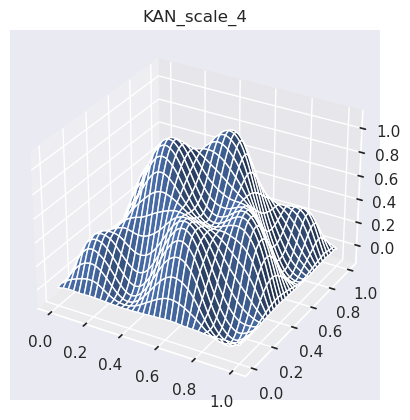}
    \end{minipage}
    \hfill
    \begin{minipage}[b]{0.32\textwidth}
        \centering
        \includegraphics[width=\textwidth]{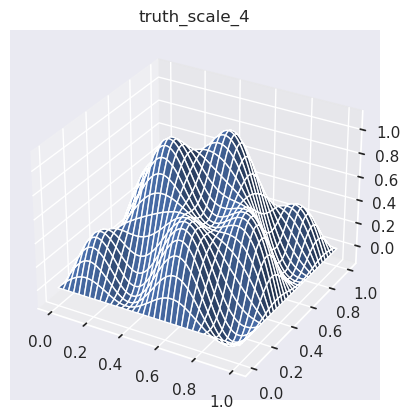}
    \end{minipage}

    \vspace{1em}

    \begin{minipage}[b]{0.48\textwidth}
        \centering
        \includegraphics[width=\textwidth]{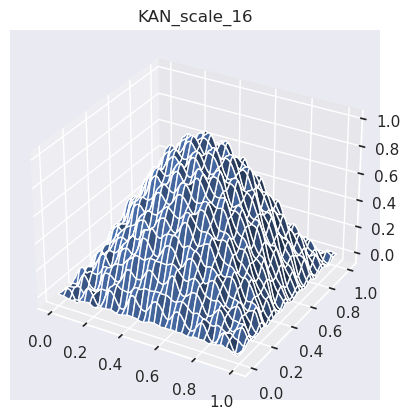}
    \end{minipage}
    \hfill
    \begin{minipage}[b]{0.48\textwidth}
        \centering
        \includegraphics[width=\textwidth]{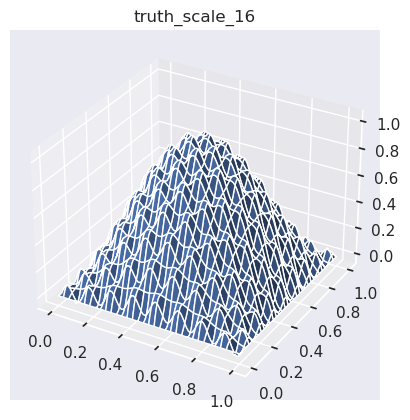}
    
    \end{minipage}
    
    \caption{2D Poisson. True solution and neural net solutions.}
    \label{fig:2ddrm}
\end{figure}
\section{1D waves of increasing amplitudes}\label{wav:app}
\begin{figure}[htbp]
    \centering
    \begin{minipage}[b]{0.32\textwidth}
        \centering
        \includegraphics[width=\textwidth]{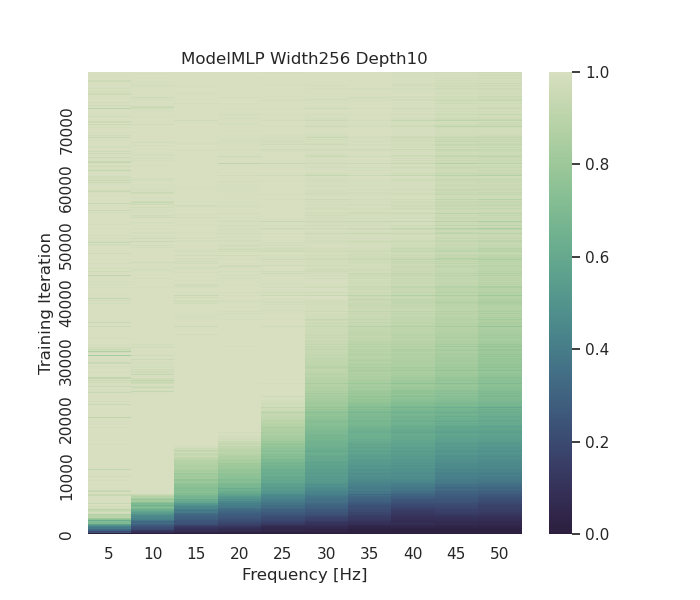}
    
    \end{minipage}
    \hfill
    \begin{minipage}[b]{0.32\textwidth}
        \centering
        \includegraphics[width=\textwidth]{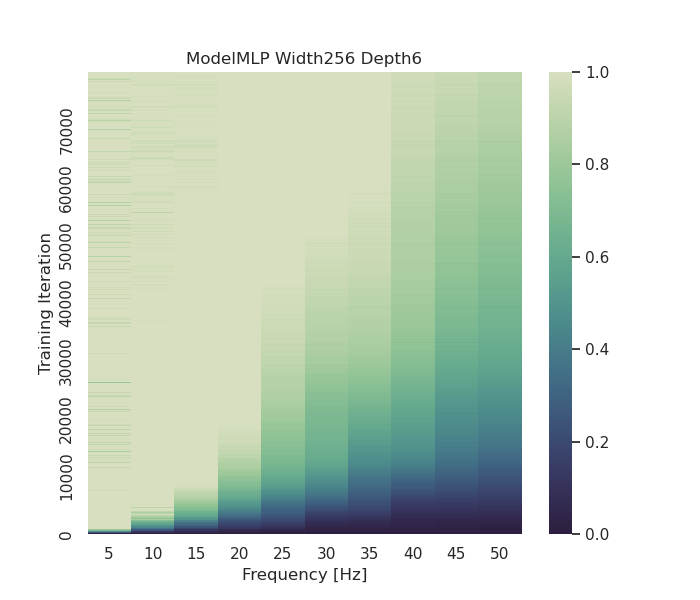}
    \end{minipage}
    \hfill
    \begin{minipage}[b]{0.32\textwidth}
        \centering
        \includegraphics[width=\textwidth]{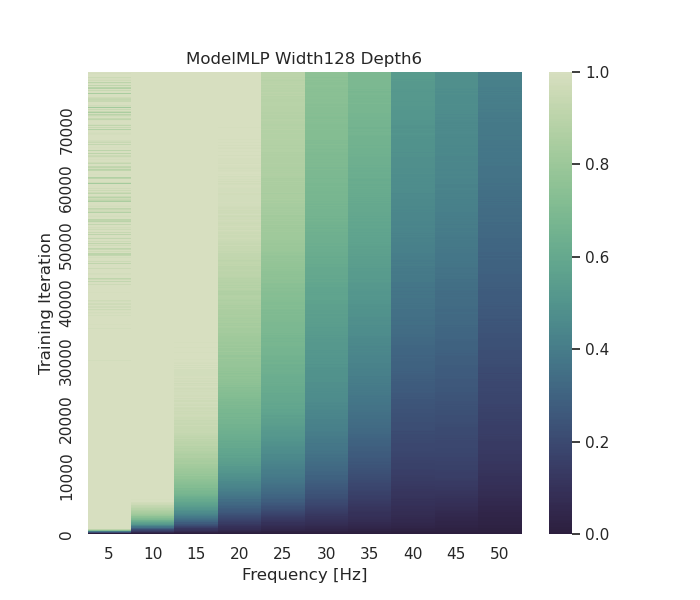}
    \end{minipage}

    \vspace{1em}

    \begin{minipage}[b]{0.32\textwidth}
        \centering
        \includegraphics[width=\textwidth]{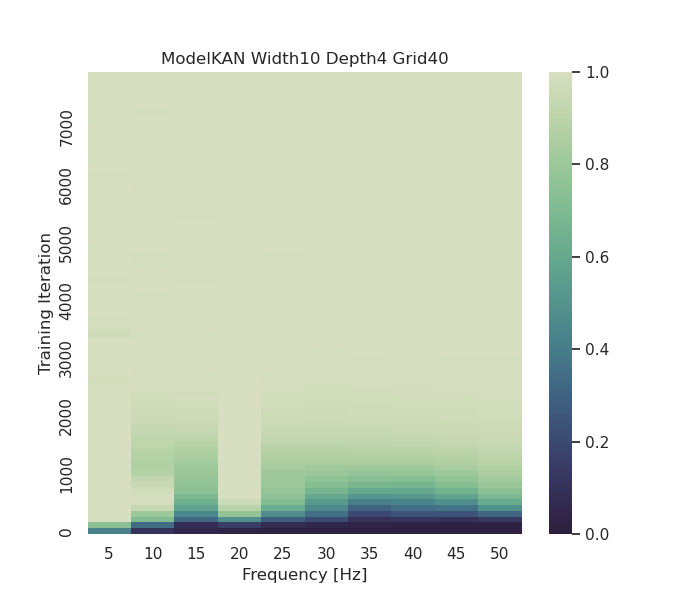}
    \end{minipage}
    \hfill
    \begin{minipage}[b]{0.32\textwidth}
        \centering
        \includegraphics[width=\textwidth]{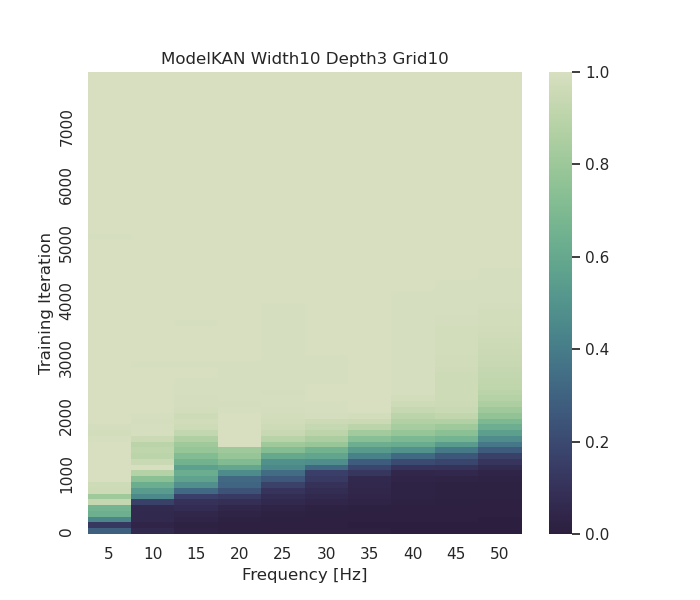}
    \end{minipage}
    \hfill
    \begin{minipage}[b]{0.32\textwidth}
        \centering
        \includegraphics[width=\textwidth]{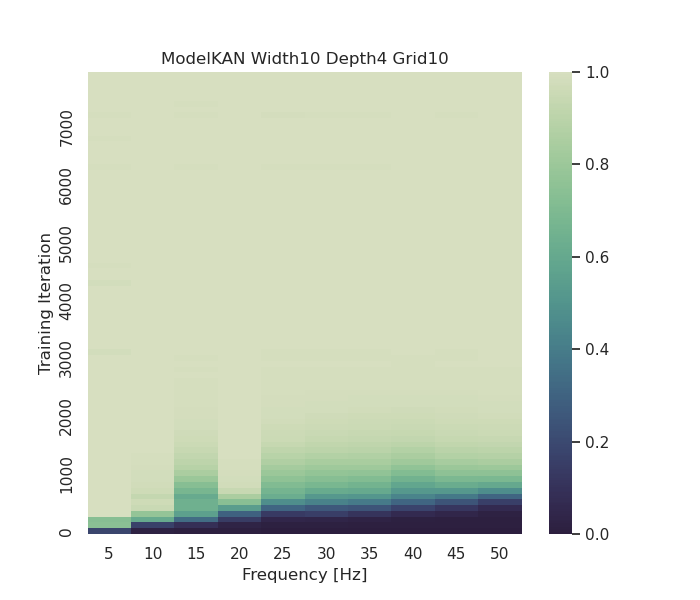}
    
    \end{minipage}
    
    \caption{1D wave dataset, where the target function has increasing amplitudes of different frequency modes. Under various hyperparameters, MLPs manifest severe spectral biases (top), while KANs do not (bottom). Note that the y axis (training steps) of MLP is 10 times that of KAN.}
    \label{fig:1D_inc}
\end{figure}
\section{Proofs}\label{proof1}
\begin{proof}[Proof of Theorem \ref{mlp-kan-representation-thm}]
We will show that each layer of an MLP with the activation function $\sigma_k$ can be represented by a KAN with two hidden layers, width $W$ and grid size $G = 2$ with degree $k$ B-splines. By composing such layers, we obtain the desired result. 

For a single layer of MLP, we consider the linear part and the non-linear activation separately. We first observe that on any compact subset of $\mathbb{R}^W$ the linear function
\begin{equation}
    x_i = \sum_{j=0}^W a_{ij}x^{in}_j + b_i
\end{equation}
can be represented with a single KAN layer of width $W$ by setting $\phi_{ij}$ to the linear function
\begin{equation}
    \phi_{i,j}(x) = a_{ij}x + \frac{b_i}{n}.
\end{equation}
We claim that this linear function can be exactly represented on any interval $[-R,R]$ in the form \eqref{KAN-activation-function}. To do this, we first set $w_b = 0$ and choose the grid points for the B-splines to be $$\{-(2k-1)R, -(2k-3)R,...,-R,R,...,(2k-3)R, (2k-1)R\}.$$ 
Note that based upon the KAN architecture, this corresponds to the extension of the uniform grid $t_0 = -R, t_1 = R$ which has grid size $G = 1$. It is also easy to verify that there are $(k+1)$ B-splines supported on this grid, whose restriction to $[-R,R]$ span the space of polynomials of degree $k$. Thus, in particular, any linear function on $[-R,R]$ can be represented as a linear combination of these B-splines.

Next, we consider the non-linear activation, which is given by the coordinatewise application of $\sigma_k$, i.e.
\begin{equation}
    x_i^{out} = \sigma(x_i).
\end{equation}
This can be represented by a single hidden layer KAN by setting
\begin{equation}
    \phi_{i,j}(x) = \begin{cases}
        \sigma_k(x) & i=j\\
        0 & i \neq j.
    \end{cases}
\end{equation}
We claim that the functions $\sigma_k$ can be represented in the form \eqref{KAN-activation-function} on any finite interval $[-R,R]$. To to this, we again set $w_b = 0$ and choose the grid points for the B-splines to be
$$\{-kR, -(k-1)R,...,-R,0,R,...,(k-2)R, (k-1)R\}.$$
This grid is the grid extension of the uniform grid $t_0 = -R, t_1 = 0, t_2 = R$ which has grid size $G = 2$. It is easy also to verify that there are $(k+2)$ B-splines supported on this grid and that any piecewise polynomial on $[-R,R]$ with a single breakpoint at $0$ which is $C^{k-1}$ is a linear combination of these B-splines. Hence the function $\sigma_k$ can be represented on $[-R,R]$ in the form \eqref{KAN-activation-function} using this grid.

The proof is now completed by composing these layers and choosing $R$ sufficiently large so that for any input $x\in \Omega$ (which is bounded) the inputs and outputs of every neuron in the original MLP lie in the interval $[-R,R]$.

\end{proof}

\begin{proof}[Proof of Theorem \ref{kan-mlp-representation-thm}]
    The proof follows from the fact that each KAN activation function $\phi_{l,i,j}$ can be represented by a ReLU$^k$ neural network with one hidden layer of width $G+2k+1$. This is due to the fact that each $\phi_{l,i,j}$ is a linear combination of $B$-splines (since we are assuming that $w_b = 0$ in \eqref{KAN-activation-function}). This implies that it must be a compactly support piecewise polynomial spline of degree $k$ whose $(k-1)$-st derivative changes only on the extended grid which has $G+2k+1$ points. Based upon this, we can construct a shallow ReLU$^k$ network with one hidden node per grid point which matches each function $\phi_{l,i,j}$ (see for instance \cite{xu2020finite}). Since there are a total of $W^2$ such functions in each layer, this means that we can implement a single KAN layer using a shallow ReLU$^k$ network of width $(G+2k+1)W^2$. By composing these networks, we obtain the desired MLP representing the KAN. We remark that after grid extension, the grid points for each for each of the spline functions $\phi_{l,i,j}$ may become different. For this reason, it is not clear whether the width in this construction can be reduced.
\end{proof}

\begin{proof}[Proof of Theorem \ref{hessian-eigenvalue-bound-theorem}]
    We first observe from \eqref{hessian-matrix-equation} that the matrix $M$ is block diagonal with $d'$ identical blocks. Denoting these $(G+k-1)d\times (G+k-1)d$ blocks by $B$, it thus suffices to prove that
    \begin{equation}
        \frac{\lambda_{(G+k-1)d}(B)}{\lambda_{d}(B)} \leq C.
    \end{equation}
    To do this, we analyze the blocks $B$ and note that they take the form
    \begin{equation}
        B = \begin{pmatrix}
            C & D & \cdots & D\\
            D & C & \cdots & D\\
            \vdots & \vdots & \ddots & \vdots\\
            D & D & \cdots & C
        \end{pmatrix}.
    \end{equation}
    Here the diagonal sub-blocks $C\in \mathbb{R}^{(G+k-1)\times (G+k-1)}$ are the Gram matrix of the one-dimensional B-spline basis, i.e.
    \begin{equation}
        C_{ij} = \int_0^1 B_i(x)B_j(x)dx,
    \end{equation}
    and the off-diagonal sub-blocks $D\in \mathbb{R}^{(G+k-1)\times (G+k-1)}$ are rank one matrices
    \begin{equation}
        D = vv^T,
    \end{equation}
    where the vector $v\in \mathbb{R}^{G+k-1}$ is given by
    \begin{equation}
        v_{i} = \int_0^1 B_i(x)dx.
    \end{equation}
    It is well-known that the Gram matrix $C$ is well-conditioned uniformly in $G$ for a fixed $k$, i.e. $\lambda_{G+k-1}(C)/\lambda_1(C) \leq K$ for a fixed constant $K$ depending only upon $k$. See for instance \cite{devore1993constructive}, Theorem 4.2 in Chapter 5, where it is shown that the $L_2$-norm of a spline and the properly scaled $\ell_2$-norm of its B-spline coefficients are equivalent up to a constant depending only on $k$. This is equivalent to the well-conditioning of the Gram matrix $C$.
    
    In addition, we can easily verify using Jensen's inequality (or Cauchy-Schwartz) that $D \preceq C$. Indeed, letting $w\in \mathbb{R}^{G+k-1}$ we see that
    \begin{equation}
        w^TDw = \left(\int_0^1 f(x)dx\right)^2 \leq \int_0^1 f(x)^2dx = w^TCw,
    \end{equation}
    where the function $f(x) = \sum_{i=1}^{G+k-1} w_iB_i(x)$.
    
    Let $\bold{1}\in \mathbb{R}^d$ be the vector of ones and note that
    \begin{equation}
        (v\otimes \bold{1})(v\otimes \bold{1})^T = \begin{pmatrix}
            D & D & \cdots & D\\
            D & D & \cdots & D\\
            \vdots & \vdots & \ddots & \vdots\\
            D & D & \cdots & D
        \end{pmatrix}
    \end{equation}
    so that $B-(v\otimes \bold{1})(v\otimes \bold{1})^T$ is a block diagonal matrix with diagonal blocks $C - D$. We proceed to upper bound the largest eigenvalue of $B$ by
    \begin{equation}\begin{split}
        \lambda_{(G+k-1)d}(B) &= \max_{\|w\| = 1} w^TBw \\
        &= \max_{\|w\| = 1} w^T\begin{pmatrix}
            C-D & 0 & \cdots & 0\\
            0 & C-D & \cdots & 0\\
            \vdots & \vdots & \ddots & \vdots\\
            0 & 0 & \cdots & C-D
        \end{pmatrix}w + (w^T(v\otimes \bold{1}))^2.
    \end{split}
    \end{equation}
    Writing $w = (w_1,...,w_d)$ with $w_i\in \mathbb{R}^{G+k-1}$ and $\sum_{i=1}^{G+k-1} \|w_i\|^2 = 1$ and using that $D = vv^T$, we get the bound
    \begin{equation}
    \begin{split}
        \lambda_{(G+k-1)d}(B) &\leq \max_{\|w_1\|^2 + \cdots + \|w_d\|^2 = 1} \sum_{i=1}^d w_i^TCw_i + \left(\sum_{i=1}^d v^Tw_i\right)^2 - \sum_{i=1}^d (v^Tw_i)^2\\
        &\leq \max_{\|w_1\|^2 + \cdots + \|w_d\|^2 = 1} \sum_{i=1}^d w_i^TCw_i + (d-1)\sum_{i=1}^d (v^Tw_i)^2
    \end{split}
    \end{equation}
    Since $D \preceq C$ we have $(v^Tw_i)^2 \leq w_i^TCw_i$ which gives the bound
    \begin{equation}
        \lambda_{(G+k-1)d}(B) \leq d\max_{\|w_1\|^2 + \cdots + \|w_d\|^2 = 1} \sum_{i=1}^d w_i^TCw_i = d\lambda_{G+k-1}(C).
    \end{equation}
    Next, we lower bound the $d$-th eigenvalue of $B$. For this, we use the Courant-Fisher minimax theorem to see that
    \begin{equation}
        \lambda_{d}(B) = \max_{W_{d}}\min_{w\in W_{d},~\|w\| = 1} w^TBw,
    \end{equation}
    where the maximum is taken over all subspaces of $W_d$ of codimension $<d$. We consider the specific subspace
    \begin{equation}
        W_d = \{(w_1,...,w_d);~v^Tw_i = 0~\text{for all $i=1,...,d$}\} \oplus \text{span}(v\otimes \bold{1})
    \end{equation}
    and observe that for any $(w_1,...,w_d)$ with $v^Tw_i = 0$ we have
    \begin{equation}
        w^TBw = \sum_{i=1}^d w_i^TCw_i \geq \lambda_1(C)\sum_{i=1}^d \|w_i\|^2 = \lambda_1(C)\|w\|^2,
    \end{equation}
    while for the $w = v\otimes \bold{1}$ (which is orthogonal) we have
    \begin{equation}
        w^TBw = \sum_{i=1}^d v^TCv + (d-1)\sum_{i=1}^d \|v\|^2 \geq \lambda_1(C)d\|v\|^2 = \lambda_1(C)\|w\|^2.
    \end{equation}
    Thus, $\lambda_d(B) \geq \lambda_1(W)$. Combining these bounds and using the well-conditioning of the Gram matrix $C$, we get
    \begin{equation}
    \frac{\lambda_{(G+k-1)d}(B)}{\lambda_{d}(B)} \leq d\frac{\lambda_{G+k-1}(C)}{\lambda_1(C)} \leq K
    \end{equation}
    for a constant $K$ which only depends upon $k$. This completes the proof.

\end{proof}

\end{document}